\tikzset{external/optimize=false}
\definecolor{mygreen}{rgb}{0.10,0.50,0.10}
\newtheorem{theorem}{Theorem}
\newtheorem{proposition}{Proposition}
\newtheorem{lemma}{Lemma}
\newtheorem{assumption}{Assumption}
\newtheorem{definition}{Definition}
\theoremstyle{definition}
\newtheorem{remark}{Remark}
\newtheorem{example}{Example}
\DeclareMathOperator{\E}{\mathbb{E}}
\DeclareMathOperator*{\argmin}{argmin}
\DeclareMathOperator*{\argmax}{argmax}
\DeclareMathOperator{\sto}{s.t.}
\DeclareMathOperator{\indicator}{\mathbb{I}}
\newcommand{\multiset}[2]{%
\mathchoice{\left(\kern-0.5em{\binom{#1}{#2}}\kern-0.5em\right)}
           {\bigl(\kern-0.3em{\binom{#1}{#2}}\kern-0.3em\bigr)}
           {\bigl(\kern-0.3em{\binom{#1}{#2}}\kern-0.3em\bigr)}
           {\bigl(\kern-0.3em{\binom{#1}{#2}}\kern-0.3em\bigr)}}
\newcommand{\lam}{\ensuremath{\lambda}}
\newcommand{\calA}{\ensuremath{\mathcal{A}}}
\newcommand{\calB}{\ensuremath{\mathcal{B}}}
\newcommand{\calD}{\ensuremath{\mathcal{D}}}
\newcommand{\calF}{\ensuremath{\mathcal{F}}}
\newcommand{\calK}{\ensuremath{\mathcal{K}}}
\newcommand{\calL}{\ensuremath{\mathcal{L}}}
\newcommand{\calS}{\ensuremath{\mathcal{S}}}
\newcommand{\setR}{\ensuremath{\mathbb{R}}}
\def\st/{\textsuperscript{st}}
\def\nd/{\textsuperscript{nd}}
\def\rd/{\textsuperscript{rd}}
\def\th/{\textsuperscript{th}}
\def\nnil{\nil}
\newcounter{prob}
\author{Miguel Calvo-Fullana, Santiago Paternain, Luiz F. O. Chamon and Alejandro Ribeiro
\thanks{M. Calvo-Fullana is with the Dept. of Information and Communication Technologies, Universitat Pompeu Fabra, Barcelona, Spain (email: miguel.calvo@upf.edu).}%
\thanks{S. Paternain is with the Dept. of Electrical Computer and Systems Engineering, Rensselaer Polytechnic Institute, Troy, NY, USA (email: paters@rpi.edu).}%
\thanks{L. F. O. Chamon is with the Excellence Cluster for Simulation Technology, University of Stuttgart, Stuttgart, Germany (email: luiz.chamon@simtech.uni-stuttgart.de).}%
\thanks{A. Ribeiro is with the Dept. of Electrical and Systems Engineering, University of Pennsylvania, Philadelphia, PA, USA (email: aribeiro@seas.upenn.edu).}%
\thanks{This work was supported in part by ARL DCIST CRA W911NF-17-2-0181, the Deutsche Forschungsgemeinschaft (DFG, German Research Foundation) under Germany's Excellence Strategy (EXC 2075-390740016), and Spain's Agencia Estatal de Investigaci\'on under grant RYC2021-033549-I.}
}
\title{State Augmented Constrained Reinforcement Learning: Overcoming the Limitations of Learning with Rewards}
\begin{document}

\maketitle

%%%%%

\begin{abstract}
A common formulation of constrained reinforcement learning involves multiple rewards that must individually accumulate to given thresholds. In this class of problems, we show a simple example in which the desired optimal policy cannot be induced by any weighted linear combination of rewards. Hence, there exist constrained reinforcement learning problems for which neither regularized nor classical primal-dual methods yield optimal policies. This work addresses this shortcoming by augmenting the state with Lagrange multipliers and reinterpreting primal-dual methods as the portion of the dynamics that drives the multipliers evolution. This approach provides a systematic state augmentation procedure that is guaranteed to solve reinforcement learning problems with constraints. Thus, as we illustrate by an example, while previous methods can fail at finding optimal policies, running the dual dynamics while executing the augmented policy yields an algorithm that provably samples actions from the optimal policy.
\end{abstract}

\section{Introduction}

Complex behavior is a fundamental trait of autonomous agents that often arises as a response to conflicting requirements. When learning these behaviors using a reinforcement learning~(RL) approach, the agent receives a reward signaling the level of attainment of each of the task requirements~\cite{sutton2018reinforcement}. Typically, the ensuing rewards are weighted and combined resulting in a regularized multi-objective reinforcement learning problem~\cite{mannor2004geometric,di2012policy,tamar2013variance,peng2018deepmimic}. While popular and sometimes effective, this approach suffers from several drawbacks, not the least of which is the fact that combination coefficients are problem-specific and must be manually selected. A time-consuming calibration process, which causes severe computational overhead~\cite{mania2018simple,peng2018deepmimic}. 

Alternatively, requirements can be made explicit in a constrained reinforcement learning (CRL) problem \cite{paternain2019constrained}. This is not altogether unrelated to regularized RL approaches because algorithms to solve CRL rely on Lagrangian dual formulations and Lagrangians \emph{are} weighted combinations of rewards (Section \ref{S:constrainedRL}). The difference is that dual methods incorporate rules to adjust the value of the multipliers, thereby eliminating the design overhead of their manual selection \cite{bhatnagar2012online, chow2015risk, tessler2018reward, paternain2022safe,ding2020natural}. 

\begin{figure}[t]
    \centering
	\includegraphics[scale=0.75]{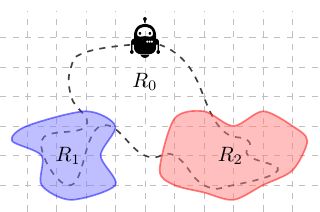} 					
	\includegraphics[scale=0.75]{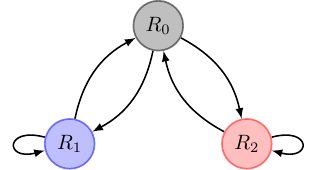} 					
	\caption{\emph{Monitoring problem}. An agent must spend $c_1$ of its time at $R_1$ and $c_2$ of its time at $R_2$. Continuous state representation on the left and discrete state abstraction on the right. Solving this problem requires a state augmentation method we develop here. }
    \label{fig:example}
\end{figure}

However related, CRL and regularized RL formulations are not equivalent. In fact, the motivation for this paper is the following observation:
\begin{description}
\item[(O1)] CRL can solve problems regularized RL cannot solve.
\end{description}
The precise meaning of this statement is that there is a class of problems for which no regularization parameters enable learning a policy that satisfies the problem requirements. There is, moreover, nothing contrived about this class. A case in point is the \emph{monitoring problem} illustrated in Figure~\ref{fig:example}. The agent is tasked with monitoring different regions of the environment; it collects unit rewards whenever it visits a target region and it must spend a certain fraction of its time in each of them. No fixed combination of these rewards can ensure that the agent learns a policy that satisfies the monitoring requirements~(Example~\ref{ex_monitoring}).

This observation foreshadows a central role for CRL in autonomy. Even simple tasks cannot be specified by formulations without constraints. It also casts some doubt on dual formulations given that Lagrangians are weighted combinations of rewards. This worry turns out to be well founded since the following is also true:
\begin{description}
\item[(O2)] Dual methods do not, in general, converge to optimal CRL policies.
\end{description}
The monitoring problem of Figure~\ref{fig:example} is an example of a  problem for which dual methods fail (Sections \ref{sec_policy_switching} and \ref{sec_primal_dual}). Indeed, as dual methods rely on the solution of regularized (weighted) problems, the inability to obtain solutions to such problems poses a fundamental obstacle to solving CRL problems using dual descent methods (Section \ref{sec_primal_recovery}).

Observation (O1) highlights the fundamental importance of CRL and Observation (O2) the lack of an algorithm that can provably solve CRL. The main contribution of this paper is the following:
\begin{description}
\item[(C1)] We develop a state augmentation procedure that enables us to solve CRL (Section \ref{sec_algorithm}).
\end{description}
The proposed state augmentation is based on reinterpreting Lagrangian weights as part of the state space of the Markov decision process (MDP). As such, this procedure is systematic and independent of the problem instance. Contribution (C1) hinges upon the following technical contribution:
\begin{description}
\item[(C2)] We prove that dual gradient descent dynamics generate \emph{trajectories} that are almost surely feasible and have expected costs that are near-optimal (Theorem \ref{theo_main_intro}). 
\end{description}
Contribution (C2) is the technical result that makes contribution (C1) possible. It claims convergence of \emph{trajectories} that are generated by dual gradient descent dynamics, but it does not claim convergence to an optimal policy. The latter can be shown impossible through counterexamples~[cf.~(O2)]. The fact that trajectories result in feasibility and near-optimality without converging to optimal policies is the reason why state augmentation is necessary to solve CRL~[cf.~(C1)]. This peculiar situation is a unique aspect of CRL that is due to the unusual combination of a problem in which (i) there is no duality gap despite presence of constraints and objectives that are non-convex (Section \ref{sec_relationship}) and (ii) recovery of optimal primal variables from optimal dual variables is not always possible despite absence of duality gap (Section \ref{sec_primal_recovery}).

\subsection{Related Work}

MDPs are stochastic control processes which can be commonly found in the study of control problems \cite{shreve1978alternative}, with diverse applications ranging from highway management \cite{golabi1982statewide} to smart grids \cite{koutsopoulos2011control}, finance \cite{krokhmal2002portfolio,di2012policy} and robotics \cite{chow2015risk,gu2017deep,achiam2017constrained}, among many others. For these problems, when the model for the underlying MDP is known, optimal policies (controllers) can be found by way of dynamic programming techniques \cite{bertsekas1996neuro}. However, if the dynamics of the MDP are unknown, one must rely on sampled trajectories of the system in order to learn optimal policies, commonly through a methodology known as reinforcement learning \cite{sutton2018reinforcement}. 

Of additional importance is the fact that for many autonomous control tasks, agents are further required to satisfy certain task requirements to specified levels. Formally, these requirements are translated into constraints, resulting in constrained Markov decision processes (CMDP) \cite{altman1999constrained}. For explicit state and action constraints, under the assumption of model knowledge, these problems can be handled by classical control-theoretic techniques such as model predictive control (MPC) \cite{mayne2000constrained}. However, if there is no a priori knowledge on the system model, a RL methodology needs to be used, wherein system trajectories are sampled to learn an optimal and feasible policy. 

To address CMDPs in a RL framework, the most common are regularized approaches \cite{di2012policy,peng2018deepmimic,tamar2013variance}, which attempt to approximate the constrained reinforcement learning problem by the maximization of an unconstrained Lagrangian with specific weighted values \cite{borkar2005actor}. While practical, the choice of these weights is complicated and can result in a computationally exhausting process of hyper-parameter tuning that is often domain dependent. 
Other approaches, such as risk-sensitive formulations \cite{borkar2002risk} or ratio-based optimization \cite{suttle2021reinforcement} avoid the regularization process by introducing proxy objectives to address the inherent trade-offs of constrained problems, however guaranteeing explicit requirements via these formulations is not trivial. Alternatively, one can attempt to dynamically adjust the regularization weights themselves. This is the case of primal-dual approaches \cite{bhatnagar2012online,chow2015risk,tessler2018reward,paternain2022safe,ding2020natural}. These methods are based on the use of Lagrange multipliers as weights. Different primal steps can be used, with policy gradient \cite{tessler2018reward}, natural policy gradient \cite{ding2020natural} or actor-critic methods \cite{bhatnagar2012online} having being studied. However, these methods share, in essence, the same critical issue that plagues regularized approaches. Namely, convergence guarantees can be provided on value functions, but they do not result in an optimal policy capable of generating such guarantees. While primal averaging techniques \cite{anstreicher2009two,nedic2009approximate} could be potentially used to recover an optimal policy, they present their own set of limitations and restrictions (see Section \ref{sec_policy_switching} for a detailed discussion). To recover an optimal policy and guarantees is the main issue that we address in this work.

\section{Constrained Reinforcement Learning}
\label{S:constrainedRL}

Let~$t \in \mathbb{N} \cup \{0\}$ denote a time index and~$\calS \subset \setR^n$ and~$\calA \subset \setR^d$ be compact sets denoting the possible states and actions of an agent described by an MDP with transition probability $p\left(s_{t+1} \mid s_{t},a_t\right)$. The agent chooses actions sequentially based on a policy $\pi \in \mathcal{P}(\mathcal{S})$, where $\mathcal{P}(\mathcal{S})$ is the space of probability measures on $(\mathcal{A}, \mathcal{B}(\mathcal{A}))$ parametrized by elements of $\mathcal{S}$, where $\mathcal{B}(\mathcal{A})$ are the Borel sets of $\mathcal{A}$. In a constrained MDP the hypothesis is that the action taken by the agent at each state results in the collection of several rewards defined by the functions $r_i: \calS \times \calA \to \mathbb{R}$ for~$i = 0,\ldots, m$. Thus, when the agent is in state $s_t$ and takes action $a_t$, it collects rewards $r_i(s_t,a_t)$. As in any MDP, the interest is on the rewards the agent accumulates over time which are represented by the value functions
\begin{align}\label{eqn_value_function}
   V_i(\pi) \triangleq \lim_{T\to\infty}  \frac{1}{T} \E_{s,a\sim\pi}  \left[ \sum_{t = 0}^T r_i(s_t,a_t) \right] .
\end{align}
As customary \cite{sutton2018reinforcement}, we assume the existence of this limit. The necessary conditions can be made explicit in the discrete case, see \cite[Chapter 8]{puterman1994markov}. Typically, the value functions $V_i(\pi)$ are in conflict with each other. Policies $\pi$ that are optimal for some $V_i$ are not good for some other $V_j$. We therefore interpret the first of these rewards, $i=0$ as an objective to maximize and the remaining rewards as \emph{requirements} to satisfy. We then introduce the constrained optimization problem
\begin{subequations}
\begin{align*}
\tag{P-CRL}\label{P:constrainedRL}
   \max_{\pi \in \mathcal{P}(\mathcal{S})} \enskip   & \lim_{T\to\infty} \frac{1}{T}\E_{s,a\sim\pi} \left[ \sum_{t = 0}^T r_0(s_t,a_t) \right]    \\
   \sto 
		\enskip   & \lim_{T\to\infty} \frac{1}{T} \E_{s,a\sim\pi} \left[ \sum_{t = 0}^T r_i(s_t,a_t) \right]  \geq c_i \text{,} \, i = 1, \ldots, m\text{.}
\end{align*}
\end{subequations}
where the constant $c_i \in \setR$ represents the~$i$-th reward \emph{specification}. We denote by $ \Pi^\star $ the optimal solutions of \eqref{P:constrainedRL}. It is important for forthcoming discussions that $\Pi^\star$ is a set. There may be several policies $\pi^\star \in \Pi^\star$ that satisfy the constraints and attain the optimal cost $P^\star \triangleq V_0(\pi^\star)$. Ideally, our goal would be to find at least one element of this set.

A customary approach to solving \eqref{P:constrainedRL} is to consider a penalized version. Formally, introduce penalty coefficients $\lam_i$ associated with each of the constraints in \eqref{P:constrainedRL}
and define the Lagrangian $\calL(\pi,\lam)$ as
\begin{align}\label{eqn_lagrangian}
   \calL(\pi,\lambda) & \triangleq V_0(\pi) + \sum_{i=1}^m \lambda_i \Big(V_i(\pi) - c_i\Big)  .
\end{align}
In lieu of the constrained optimization \eqref{P:constrainedRL}, we settle for policies $\Pi(\lam)$ that maximize the Lagrangian in \eqref{eqn_lagrangian},
\begin{align}\label{eqn_lagrangian_maximizers}
   \Pi(\lambda) \triangleq \argmax_{\pi} V_0(\pi)  + \sum_{i=1}^m \lambda_i \Big(V_i(\pi) - c_i\Big).
\end{align}
The advantage of replacing \eqref{P:constrainedRL} with its Lagrangian form in \eqref{eqn_lagrangian_maximizers} is that the latter is a plain, unconstrained MDP which we can solve with standard reinforcement learning algorithms. Indeed, define the weighted reward
\begin{align}\label{eqn_lagrangian_reward}
   r_\lambda(s_t,a_t) = r_0(s_t,a_t) + \sum_{i=1}^m \lam_i \Big( r_i(s_t,a_t) - c_i \Big).
\end{align}
Given this definition we can reorder terms in \eqref{eqn_lagrangian_maximizers} to write $\Pi(\lam)$ as the maximizer of the value function $V(\pi,\lambda)$ that is associated with this reward,
\begin{align}\label{eqn_lagrangian_maximizers_rl} 
   \Pi(\lambda)  &=   
   \argmax_{\pi} \lim_{T\to\infty} \frac{1}{T} \E_{s,a\sim\pi} \left[ \sum_{t = 0}^T r_\lambda(s_t,a_t) \right]\nonumber\\
     & \triangleq   \argmax_{\pi} V(\pi,\lambda). \tag{R-CRL}
\end{align}
It is important for forthcoming discussions that $\Pi(\lam)$ is a set. There may be several policies that maximize value function $V(\pi,\lambda)$. This is the same observation we made of \eqref{P:constrainedRL}. We use $\pi(\lam)$ to denote individual Lagrangian maximizing policies and $\pi(s, \lam)$ to denote their evaluation at state $s$.

The drawback of using \eqref{eqn_lagrangian_maximizers_rl} is that appropriate choice of the penalty coefficients $\lam_i$ is difficult. Among other challenges, coefficients $\lam_i$ that make \eqref{eqn_lagrangian_maximizers_rl} and \eqref{P:constrainedRL} close depend on the transition probability of the MDP, which is assumed unknown. We can circumvent this problem by adapting the $\lam_i$ penalties. To do that, we introduce an iteration index $k$, a step size $\eta_\lambda$ and an epoch duration $T_0$. Coefficients $\lam_i$ are updated as determined by the iteration, 
\begin{align}\label{eqn_stochastic_dual_descent_specified}
   \lambda_{i, k+1} 
      = \left [  \lambda_{i,k} - 
           \frac{\eta_\lambda}{T_0}
              \sum_{t= kT_0}^{(k+1)T_0-1} 
                 \Big( r_i(s_t,a_t)-c_i \Big) 
                    \right]_+\hspace{-0.5ex},\hspace{-0.5ex} 
\end{align}
where actions $a_t \sim \pi(s_t, \lam_k)$ for any policy $\pi(\lam_k)\in\Pi(\lam_k)$. According to this expression, time is separated in epochs of duration $T_0$. During the $k$-th epoch, the penalty coefficients are $\lam_{k}$ and we consequently execute the policy $ \pi(\lam_k)$ between times $kT_0$ and $(k+1)T_0 -1$. During this epoch, the agent collects rewards $r_i(s_t,a_t)$ that are attributed to this policy. Ultimately, $T_0$ acts a an additional hyper-parameter whose optimal choice is dependent on the properties of the underlying MDP. Further discussion on the choice of $T_0$ can be found on Appendix \ref{app:sec_realizable}. The penalty coefficient $\lam_{i,k}$ is updated according to whether the accumulated reward exceeded the target $c_i$ or not. If the accumulated reward exceeds the target $c_i$, the penalty coefficient $\lam_{i,k}$ is reduced. If the accumulated reward falls below target, the coefficient is increased. Beyond a sensible update rule, \eqref{eqn_stochastic_dual_descent_specified} can be seen as a stochastic gradient descent update on the dual function. Leveraging this fact, this paper establishes the following theorem, the proof of which can be found in Appendix \ref{sec_unbiased}. 

\begin{theorem}\label{theo_main_intro}
An agent chooses actions according to $a_t \sim \pi(\lambda_k)$, where $\pi(\lam_k)\in\Pi(\lam_k)$ with $\lambda_k$ updates following \eqref{eqn_stochastic_dual_descent_specified}. Assume that
\begin{description}
\item[(A1)] There exists a strictly feasible policy $\pi^\dagger$ such that for some $C>0$ and all $i$ constraints $V_i(\pi^\dagger)-c_i \geq C$.
\item[(A2)] There exists $B$ such that $|r_i(s,a) - c_i| \leq B$ for all states $s$, actions $a$ and constraints $i$.
\item[(A3)] The accumulated reward in \eqref{eqn_stochastic_dual_descent_specified} is an unbiased estimate of the value function $V_i(\pi(\lam_k))$,
\begin{align}\label{eqn_unbiased}
   \E\left[
      \frac{1}{T_0} 
         \sum_{t= kT_0}^{(k+1)T_0-1} 
            r_i(s_t,a_t)
               ~\Big |~ \lam_k
                  \right] 
                     = V_i \bigl(\pi(\lam_k)\bigr).
\end{align}
\end{description}
Then, the state-action sequences $(s_t,a_t)$ are feasible with probability one,
\begin{align}\label{eqn_theo_feasibility_informal}  
    \liminf_{T\to\infty} \frac{1}{T}\sum_{t=0}^{T-1} r_i(s_t,a_t) \geq c_i, 
    \quad \text{~a.s.}
\end{align}
And they are near-optimal in the following sense
\begin{align}\label{eqn_theo_optimality_informal}
  \lim_{T\to\infty} \mathbb{E}\left[\frac{1}{T} \sum_{t=0}^{T-1} r_0(s_t,a_t) \right]\geq P^\star-\eta_\lambda \frac{B^2}{2}.
\end{align}
\end{theorem}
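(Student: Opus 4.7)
My plan is to interpret the update \eqref{eqn_stochastic_dual_descent_specified} as projected stochastic subgradient descent on the convex dual function $d(\lambda) := \max_\pi \mathcal{L}(\pi,\lambda)$. Because $\mathcal{L}(\pi,\lambda)$ is affine in $\lambda$, a Danskin-type argument identifies $V(\pi(\lambda)) - c$ as a subgradient of $d$ at $\lambda$ for any $\pi(\lambda)\in\Pi(\lambda)$. Assumption (A3) then says precisely that the vector $\hat g_k$ with coordinates $\hat g_{i,k} := \frac{1}{T_0}\sum_{t=kT_0}^{(k+1)T_0-1}(r_i(s_t,a_t)-c_i)$ is a conditionally unbiased estimator of this subgradient at $\lambda_k$. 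Everything downstream is then a careful SGD analysis, coupled with weak duality and the Slater condition (A1).

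\textbf{Feasibility.}
Since $[x]_+\geq x$, the update yields $\eta_\lambda \hat g_{i,k} \geq \lambda_{i,k} - \lambda_{i,k+1}$. Telescoping over $k=0,\dots,K-1$ and substituting the definition of $\hat g_{i,k}$ produces the pathwise inequality $\frac{1}{KT_0}\sum_{t=0}^{KT_0-1}\bigl(r_i(s_t,a_t)-c_i\bigr)\geq (\lambda_{i,0}-\lambda_{i,K})/(K\eta_\lambda)$, so \eqref{eqn_theo_feasibility_informal} follows as soon as $\lambda_{i,K}/K \to 0$ almost surely. I would establish that via the Lyapunov function $\|\lambda_k\|^2$: using $[\cdot]_+^2 \leq (\cdot)^2$ coordinatewise gives $\mathbb{E}[\|\lambda_{k+1}\|^2\mid\lambda_k] \leq \|\lambda_k\|^2 - 2\eta_\lambda \lambda_k^\top g_k + \eta_\lambda^2 \mathbb{E}[\|\hat g_k\|^2\mid\lambda_k]$ with $g_k = V(\pi(\lambda_k))-c$. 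Comparing the Lagrangian values of $\pi(\lambda_k)$ and $\pi^\dagger$ using (A1) yields $\lambda_k^\top g_k \geq C\|\lambda_k\|_1 + V_0(\pi^\dagger) - V_0(\pi(\lambda_k))$, so the drift becomes strictly negative as soon as $\|\lambda_k\|$ is large. Combined with the bound from (A2), this makes $\|\lambda_k\|^2$ a quasi-supermartingale, giving $\sup_k \mathbb{E}\|\lambda_k\|^2 < \infty$, and a Markov plus Borel-Cantelli argument then promotes this moment bound to $\|\lambda_k\|/k \to 0$ almost surely.

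\textbf{Near-optimality.}
The same descent relation, but rewritten using $\lambda_k^\top g_k = \mathcal{L}(\pi(\lambda_k),\lambda_k) - V_0(\pi(\lambda_k))$ and the weak-duality bound $\mathcal{L}(\pi(\lambda_k),\lambda_k) = d(\lambda_k) \geq P^\star$, becomes
$$2\eta_\lambda\bigl(P^\star - V_0(\pi(\lambda_k))\bigr) \leq \|\lambda_k\|^2 - \mathbb{E}\bigl[\|\lambda_{k+1}\|^2\mid \lambda_k\bigr] + \eta_\lambda^2\,\mathbb{E}\bigl[\|\hat g_k\|^2\mid\lambda_k\bigr].$$
Taking total expectation, summing $k=0,\dots,K-1$, dividing by $K$, and dropping the nonnegative $\mathbb{E}\|\lambda_K\|^2$ term yields $\frac{1}{K}\sum_{k=0}^{K-1}\mathbb{E}[V_0(\pi(\lambda_k))] \geq P^\star - \|\lambda_0\|^2/(2K\eta_\lambda) - \eta_\lambda B^2/2$ (with $B$ absorbing the $m$ coming from (A2)). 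Sending $K\to\infty$ and applying (A3) to $r_0$ replaces the average of the values $V_0(\pi(\lambda_k))$ by $\mathbb{E}\bigl[\frac{1}{T}\sum_{t=0}^{T-1}r_0(s_t,a_t)\bigr]$, giving \eqref{eqn_theo_optimality_informal}.

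\textbf{Main obstacle.}
The delicate step is the almost-sure feasibility claim: bounded second moments of $\|\lambda_k\|$ only yield $\|\lambda_k\|/k\to0$ after a Borel-Cantelli step, and the statement of \eqref{eqn_theo_feasibility_informal} as a genuine limit rather than a $\liminf$ needs to be read carefully from the telescoping identity. A secondary subtlety is that $\Pi(\lambda_k)$ is a set, so the subgradient identification $V(\pi(\lambda_k))-c \in \partial d(\lambda_k)$ must apply to whichever Lagrangian maximizer the algorithm actually rolls out in the $k$-th epoch; this ambiguity is exactly what assumption (A3) is designed to absorb, which is presumably why the appendix is organized around the unbiasedness condition.
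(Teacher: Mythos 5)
Your optimality argument is essentially the paper's own: the appendix also reads \eqref{eqn_stochastic_dual_descent_specified} as stochastic subgradient descent on the dual function, derives exactly your descent inequality (packaged as an ``ergodic complementary slackness'' lemma bounding $\frac{1}{K}\sum_k \E[\lambda_k^\top g_k]$ by $\eta_\lambda B^2/2$ plus a vanishing term), combines it with $d(\lambda_k)\geq P^\star$ and Danskin's identification of $V(\pi(\lambda_k))-c$ as a subgradient, applies the unbiasedness hypothesis to $r_0$ as well as to the constraint rewards (as you do), and passes from the resulting liminf bound to \eqref{eqn_theo_optimality_informal} by invoking the existence of the limit assumed in \eqref{P:constrainedRL}. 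Where you genuinely diverge is feasibility. The paper never telescopes the update: it proves that the sequence of laws of $\lambda_k$ is \emph{tight} --- showing $\|\lambda_k-\lambda^\star\|^2$ is a supermartingale (via a stopping-time construction) whenever $\lambda_k$ lies outside the sublevel set $\calD=\{\lambda\in\setR^m_+ : d(\lambda)-P^\star\leq\eta_\lambda B^2/2\}$, which is bounded by the Slater condition (A1) --- and then obtains \eqref{eqn_theo_feasibility_informal} by contradiction: a positive-probability constraint violation would drive $\lambda_{i,k}\to\infty$ on that event, contradicting tightness. Your route (pathwise telescoping reducing feasibility to $\lambda_{i,K}/K\to0$ a.s., then a uniform moment bound plus Markov and Borel--Cantelli) is the standard constrained-online-learning argument; it is more quantitative, exhibiting the constraint violation rate $(\lambda_{i,0}-\lambda_{i,K})/(K\eta_\lambda)$, while the paper's tightness-plus-contradiction argument is purely qualitative but needs less from the drift analysis.

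Two details in your feasibility half need tightening. First, the bound $\lambda_k^\top g_k \geq C\|\lambda_k\|_1 + V_0(\pi^\dagger)-V_0(\pi(\lambda_k))$ is only useful if $V_0(\pi(\lambda_k))$ is bounded above, i.e.\ if $r_0$ is bounded, which (A2) does not literally supply (it covers only $i=1,\ldots,m$); the paper avoids this by getting negative drift outside $\calD$ directly from $d(\lambda_k)-d(\lambda^\star)>\eta_\lambda B^2/2$ and bounding $\calD$ using only $P^\star-V_0(\pi^\dagger)$, and you can adopt the same set without changing anything else. Second, ``quasi-supermartingale $\Rightarrow \sup_k\E\|\lambda_k\|^2<\infty$'' is asserted rather than proved: the drift of $\|\lambda_k\|^2$ is negative only outside a ball and its increments grow with $\|\lambda_k\|$, so you need a genuine drift lemma (e.g.\ a Hajek-type bound applied to $\|\lambda_k\|$, whose increments are bounded by $\eta_\lambda B\sqrt m$, or the paper's stopped-process device); this step carries the same weight as the paper's tightness lemma and is where the real work sits. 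With those repairs your argument goes through, and your explicit flagging of the liminf-versus-limit reading of \eqref{eqn_theo_feasibility_informal} and of the set-valuedness of $\Pi(\lambda_k)$ is, if anything, more careful than the paper's own treatment.
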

\begin{proof}
See Appendix \ref{sec_unbiased}.
\end{proof}

Theorem \ref{theo_main_intro} claims that if the penalty coefficients are \emph{continuously updated} as in \eqref{eqn_stochastic_dual_descent_specified} with actions being chosen as $a_t \sim \pi(\lam_k)$, then the state-action \emph{trajectory} satisfies the constraints in \eqref{P:constrainedRL} and attains a reward that is within $\eta_\lambda B^2 / 2$ of optimal. This claim produces an \emph{algorithm} whose execution results in actions that are feasible and nearly optimal in the long term. This is notable because \eqref{P:constrainedRL} is not a convex optimization problem. Although not completely unexpected since recent results have shown that \eqref{P:constrainedRL} has null duality gap \cite{paternain2019constrained}. And indeed, a result analogous to Theorem \ref{theo_main_intro} for discounted constrained MDPs is part of \cite{ding2020natural}. 

The assumptions made in Theorem \ref{theo_main_intro} are customary. Assumption~(A1) merely asks for the existence of a feasible policy, while (A2) requires the rewards to be bounded, a necessary condition for producing feasible iterates. Assumption~(A3) is guaranteed if one can sample from the stationary distribution of the MDP. A common hypothesis in RL, specially for continuing tasks \cite{sutton2018reinforcement}. However, it is possible that (A3) might not hold in practice. Appendix \ref{app:sec_realizable} addresses this, replacing (A3) in a  more technical variation of Theorem \ref{theo_main_intro}. Under a judicious choice of the time horizon $T_0$, the same feasibility guarantees are shown to hold, with only a reduction of optimality ensuing.

Hypotheses aside, there is however a caveat to Theorem \ref{theo_main_intro}. While notable for what it claims, it is also notable for what it does \emph{not} claim. Theorem \ref{theo_main_intro} does not guarantee that we are able to find an optimal policy $\pi^\star$. It is not possible to stop the iteration in \eqref{eqn_stochastic_dual_descent_specified} at some $k$ and claim that the policy $\pi(\lam_k)$ is close to optimal. This is a subtle yet fundamental issue because it precludes a claim that \eqref{eqn_stochastic_dual_descent_specified} finds the optimal policy. Instead, \eqref{eqn_stochastic_dual_descent_specified} generates \emph{trajectories} that are feasible and near-optimal by the \emph{continuous execution} of the primal iteration in \eqref{eqn_lagrangian_maximizers_rl} and the dual iteration in \eqref{eqn_stochastic_dual_descent_specified}. Section \ref{sec_discussions} covers a simple example to further clarify this point.

The algorithmic implication is that we must sample from the policies $\pi(\lam_k)$ in \eqref{eqn_lagrangian_maximizers_rl} while we run the dual iteration in \eqref{eqn_stochastic_dual_descent_specified}; see Algorithm~\ref{algorithm_exec}. Then, the purpose of a CRL algorithm can be modified as learning the policies $\pi(\lam_k)$ that solve \eqref{eqn_lagrangian_maximizers_rl}; see Algorithm~\ref{algorithm}. Both of these algorithmic modifications are easiest to understand as a form of state augmentation as we explain in the following section. 

\subsection{Augmented Constrained Reinforcement Learning} \label{sec_algorithm}

The \emph{State Augmented Constrained Reinforcement Learning (A-CRL)} algorithm reinterprets $\lam_k$ as part of the state space. This is possible because the dual variable update in \eqref{eqn_stochastic_dual_descent_specified} defines a Markov process. If we let $\mathbf{s}_k$ and $\mathbf{a}_k$ denote the states and actions that are observed during the $k$-th epoch, the update in  \eqref{eqn_stochastic_dual_descent_specified} defines a memoryless probability distribution,
\begin{align}\label{eqn_dual_pd}
   \lambda_{k+1} \sim
         p(\lam_{k+1} | \lam_k,  \mathbf{s}_k, \mathbf{a}_k). 
\end{align}
This state augmentation is summarized in Figure \ref{fig_blcok_diagram}. The top part of the diagram represents the transition probability of the given MDP. The bottom part of the diagram represents state augmentation with Lagrange multipliers that evolve according to the probability distribution in \eqref{eqn_dual_pd} defined by the update in \eqref{eqn_stochastic_dual_descent_specified}. Put together, we have a state \emph{augmented} MDP whose state is made of of the \emph{pair} $(s_t, \lam_k)$. Observe that the time scales at which $s_t$ and $\lam_k$ are updated are different. This is important in both theory (see Appendix) and practice (see Section \ref{sec_example}).

\begin{figure}[t]
    \centering
	\includegraphics[scale=1]{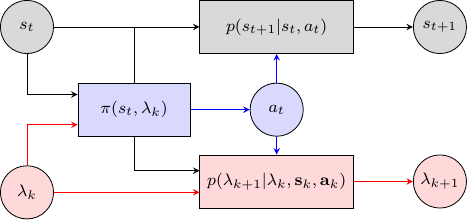} 					    			
	\caption{To solve CRL we must augment the state to incorporate multipliers $\lam_k$ (red). During training we learn a policy for this augmented MDP (blue). During online execution we must run dual dynamics (red) to guarantee convergence (Theorem \ref{theo_main_intro}).}
    \label{fig_blcok_diagram}
\end{figure}

In the middle of the diagram in Figure \ref{fig_blcok_diagram} we have the policy $\pi (s_t,\lam_k)$ which generates actions $a_t$. This policy is a function of the MDP's state and the multiplier $\lam_k$. Our goal is to learn a policy for which $\pi (\,\cdot\,, \lam) = \pi(\lam)$. That is, a policy $\pi$ that when evaluated at a specific state $\lambda$ yields a Lagrangian maximizing policy $\pi(\lam)$ [cf. \eqref{eqn_lagrangian_maximizers_rl}]. Remarkably, this is just a conventional RL problem with instantaneous rewards given by \eqref{eqn_lagrangian_reward}. Thus, the training step of A-CRL entails the use of any RL algorithm of choice to solve this augmented MDP. 

This training methodology is summarized in Algorithm \ref{algorithm}. For practical reasons we introduce a parameter $\theta \in \mathbb{R}^d$ and consider parameterized policies $\pi_\theta$ so that actions are drawn as $a\sim\pi_\theta(s,\lam)$. To train this policy we sample the augmented state space ${\cal S}\times\Lambda$ (Step 1) and evaluate the rewards $ r_\lambda(s_t,a_t)$ (Step 2). These rewards define an (augmented) MDP that we solve with the chosen RL method (Step 3). Since the rewards used in Step 3 are the ones in \eqref{eqn_lagrangian_reward}, the policy $\pi_{\theta^\star}$ is such that when evaluating $\pi_{\theta^\star}(\,\cdot\, , \lam)$ for a given $\lambda$ the policy is a Lagrangian maximizing policy $\pi(\lambda)$ if the learning parametrization is sufficiently expressive (in the sense of universal approximators such as neural networks \cite{hornik1989multilayer} or radial basis functions \cite{park1991universal}). Classic reinforcement learning algorithms, such as policy gradient \cite{sutton2000policy} or actor-critic methods \cite{konda2000actor} can be used. These are gradient-based algorithms which are not guaranteed to converge to global optimality (unless the state-action space is finite \cite{agarwal2020optimality}), yet they show good empirical success converging to solutions with small suboptimality. We point out that numerical results in Section \ref{sec_example} use policy gradient in Step 3, but A-CRL is agnostic to this choice.

\begin{algorithm}[t]
{
\renewcommand{\algorithmicrequire}{\textbf{Input:}}
\renewcommand{\algorithmicensure}{\textbf{Output:}}
\caption{A-CRL Training Phase} \label{algorithm} 
\begin{algorithmic}[1]  
\Ensure Trained policy $\pi_{\theta^\star}$
\State Sample $(s,\lam)$ from augmented space $\calS\times\Lambda$ 
\State Construct the augmented rewards [cf. \eqref{eqn_lagrangian_reward}]
$$\displaystyle{
 r_\lambda(s,a) = r_0(s,a) + \sum_{i=1}^m \lam_i \Big( r_i(s,a) - c_i \Big)}
$$
\State Use the RL algorithm to obtain policy
$$\displaystyle{
\theta^\star  =  
\argmax\limits_{\theta \in \setR^d} \lim\limits_{T\to\infty} \frac{1}{T}\E_{s,a\sim\pi_\theta} \left[ \sum_{t = 0}^T r_{\lambda}(s_t,a_t)\right]}
$$
\end{algorithmic}}
\end{algorithm}

\begin{algorithm}[t]
{
\renewcommand{\algorithmicrequire}{\textbf{Input:}}
\renewcommand{\algorithmicensure}{\textbf{Output:}}
\caption{A-CRL Execution Phase} \label{algorithm_exec} 
\begin{algorithmic}[1]  
\Require Policy $\pi_{\theta^\star}$, step $\eta_\lambda$, requirements $c_i$, epoch $T_0$
\Ensure Trajectories $(s_t,a_t)$ satisfying Theorem \ref{theo_main_intro}
\setcounter{ALG@line}{0}
\State \textit{Initialize}: Given initial state $s_0$, dual variable $\lambda_0 = 0$
\For {$k=0,1\ldots$}
\State Rollout $T_0$ steps with actions $a_t\sim\pi_{\theta^\star}(s_t,\lambda_k)$
\State Update dual dynamics [cf. \eqref{eqn_stochastic_dual_descent_specified}]
$$
\displaystyle{\lambda_{i, k+1} = 
   \left [ \lambda_{i,k} - 
      \frac{\eta_\lambda}{T_0}  
         \sum_{t= kT_0}^{(k+1)T_0-1} 
            \Big( r_i(s_t,a_t)-c_i \Big)
               \right]_+} 
$$
\EndFor
\end{algorithmic}}
\end{algorithm}

The more important aspect to highlight of A-CRL is that during execution, we use the trained policy $\pi_{\theta^\star}(s, \lambda)$, and continuously sample $\lambda_k$ by using the dual dynamics in  \eqref{eqn_stochastic_dual_descent_specified}. This is summarized in Algorithm \ref{algorithm_exec}. Time is divided in epochs of duration $T_0$ (Step 2). During each epoch we rollout $T_0$ steps where actions are drawn as  $a_t\sim\pi_{\theta^\star}(s_t,\lambda_k)$ (Step 3). We then proceed to update the state of the dual variables (Step 4). This algorithm is unusual because we do not use \eqref{eqn_stochastic_dual_descent_specified} as part of a learning process. We use \eqref{eqn_stochastic_dual_descent_specified} as an online computation that controls a switching across the policies $\pi_{\theta^\star}(\,\cdot\, ,\lambda_k)$ that are learned during the training process. Online execution of Algorithm \ref{algorithm_exec} guarantees that the claims of Theorem \ref{theo_main_intro} hold for the MDP trajectory $(s_t,a_t)$.

A-CRL redefines the purpose of CRL as learning the Lagrangian maximizing policies $\pi(\lam)$. This is a good premise because we know how to find policies $\pi(\lam)$. It is just an MDP with an augmented state space that we solve with Algorithm 1. We further know that executing policies $\pi(\lam_k)$ while updating $\lam_k$ with the dynamics in \eqref{eqn_stochastic_dual_descent_specified} guarantees the creation of trajectories that satisfy the feasibility and near-optimality guarantees of Theorem \ref{theo_main_intro}. This is what Algorithm 2 does. One may think that this is a cumbersome redefinition of the stated goal of finding an optimal policy $\pi^\star$. However, as the next section explains, this is not an artifact but a fundamental aspect of CRL. 

\section{Discussions and Implications}\label{sec_discussions}

To explain the meaning of Theorem \ref{theo_main_intro} and the algorithm in Section \ref{sec_algorithm} it is instructive to go back to the monitoring task introduced in Figure \ref{fig:example}. We have a moving agent that must spend some of its time in each of two regions $R_1$ and $R_2$. A minimal abstraction of this problem is the MDP with 3 states that is also illustrated in Figure \ref{fig:example} and which we formally define next. 

\begin{example}[\textit{Monitoring Problem}]\label{ex_monitoring}
Consider an MDP with states $R_0$, $R_1$ and $R_2$. In each state the choice of action determines the next state, $s_{t+1} = a_t$. When in state $R_0$ possible actions are $\{R_1, R_2\}$. When in sate $R_i$ with $i\neq0$, possible actions are $\{R_0, R_i\}$. We want to spend at least $c_i$ fraction of time in each of the states $R_1$ and $R_2$. Otherwise, we want to spend as much time as possible at $R_0$. We set $c_1=c_2=c=1/3$.
\end{example}

To learn policies that solve the monitoring task in Example~\ref{ex_monitoring} we define the reward functions
\begin{align}\label{eqn_monitoring_rewards}
   r_i(s_t, a_t) 
      = \indicator 
         ( s_t = R_i ) 
            \text{\quad for~} i=0, 1, 2,
\end{align}
to formulate a constrained MDP [cf. \eqref{eqn_value_function}] in which we have: (i) Two constraints $V_i(\pi)\geq c_i$ for $i=1,2$ where the value functions $V_i(\pi)$ are associated with the corresponding rewards $r_i(s_t, a_t)$ of \eqref{eqn_monitoring_rewards}. (ii) The objective $V_0(\pi)$ is the value function associated with the reward $r_0(s_t, a_t)$ of \eqref{eqn_monitoring_rewards}. 

The choice $c=1/3$ makes an optimal policy $\pi^\star$ that solves this constrained MDP ready to find. Just choose with probability $1/2$ between the two actions that are allowed at each state. This yields a symmetric Markov chain in which the ergodic limits are $1/3$ for all states. This satisfies the requirement of spending at least $c=1/3$ fraction of time at $R_1$ and $R_2$ and maximizes the time spent at $R_0$. More to the point, this policy is optimal because $V_1(\pi^\star)\geq c=1/3$, $V_2(\pi^\star)\geq c=1/3$ and $V_0(\pi^\star) = 1/3 = P^\star$.

To design an algorithm that finds a solution we write the Lagrangian [cf. \eqref{eqn_lagrangian}] to transform the constrained MDP into a plain MDP. As we saw in \eqref{eqn_lagrangian_reward}, doing so is equivalent to defining a weighted reward as in \eqref{eqn_lagrangian_maximizers_rl}. In this particular example, the weighted reward takes the form
\begin{align}\label{eqn_monitoring_weighted_reward}
   r_{\lambda} (s_t, a_t)
      = \indicator ( s_t = R_0 ) 
         &+ \lambda_1 \Big(\indicator ( s_t = R_1 ) - c \Big) \nonumber\\
         &+ \lambda_2 \Big( \indicator ( s_t = R_2 ) - c \Big).
\end{align}
An unconstrained MDP with this reward has elementary solutions. If $1>\lam_1$ and $1>\lam_2$, we maximize the accumulated reward $(1/T)\sum_{t = 0}^T r_{\lambda}(s_t, a_t)$ by staying at the region $R_0$ as much as possible; which is half of the time. If $\lam_1>1$ and $\lam_1>\lam_2$, we maximize the accumulated reward by staying at $R_1$ all of the time. Similarly, if $\lam_2>1$ and $\lam_2>\lam_1$, we maximize the accumulated reward by staying at $R_2$ all of the time. Clearly, none of these three policies are optimal for the constrained MDP, as they do not satisfy the constraints. The remaining option is to make $\lam_1=\lam_2=1$. This choice of weights is such that \emph{any} policy maximizes the accumulated reward of the unconstrained MDP, since the agent collects the same reward in all states. The set $\Pi(\lam_1=1, \lam_2=1)$ includes all policies. We therefore see that a constrained MDP that uses the rewards in \eqref{eqn_monitoring_rewards} as constraints and its penalized version using the reward in \eqref{eqn_monitoring_weighted_reward} are not equivalent. This exemplifies the limitations of learning with rewards. We summarize this observation in the following remark.

\begin{remark}\label{rmk_cmdp_pmdp_not_equivalent}
The constrained MDP in \eqref{P:constrainedRL} and the regularized MDP in \eqref{eqn_lagrangian_maximizers_rl} are not equivalent representations of the same problem. There exist constrained MDPs for which there is no unconstrained MDP with regularizer $\lambda$, such that $\Pi^\star \equiv \Pi(\lambda)$.
\end{remark}

Remark \ref{rmk_cmdp_pmdp_not_equivalent} appears to indicate that the algorithm defined by~\eqref{eqn_stochastic_dual_descent_specified} should not be able to find solutions of the constrained MDP defined in \eqref{P:constrainedRL}. Any choice of $\lambda$ results in an optimal set $\Pi(\lambda)$ that is not equivalent to the optimal set $\Pi^\star$. Whenever we solve \eqref{eqn_lagrangian_maximizers_rl} we obtain a policy that is not optimal. Further note that, if anything, Remark \ref{rmk_cmdp_pmdp_not_equivalent} understates the magnitude of the problem. The policies $\pi(\lam)$ for the monitoring problem bear no resemblance to the optimal policy $\pi^\star$ unless  $\lam_1=\lam_2=1$. And when $\lam_1=\lam_2=1$ all policies belong to the set $\Pi(\lam_1=1, \lam_2=1)$. One of these is the optimal policy $\pi^\star$. This is a tautology that provides no information about how to find the optimal policy. 

How come, then, that Theorem \ref{theo_main_intro} holds? As we explained in Section \ref{S:constrainedRL}, Theorem \ref{theo_main_intro} does not claim that we find the optimal policy. Rather, that we can provide an \emph{algorithm} whose execution results in the solution of \eqref{P:constrainedRL}. This is weaker, but still difficult to reconcile with the statement of Remark \ref{rmk_cmdp_pmdp_not_equivalent}. To reconcile these statements, we dwell in the duality properties of the constrained MDP in \eqref{P:constrainedRL}.

\begin{figure*}[t!]
    \centering
    \subfigure[{Constraint satisfaction $V_1(\pi)$.}]{
	\includegraphics[scale=1.075]{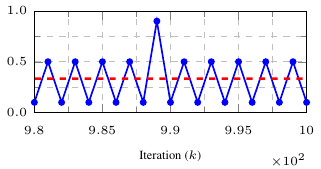} 					
     \label{fig:example_sim_dual_v}
	 } \hspace{-0.325cm}
    \subfigure[{Dual variable $\lambda_{1,k}$.}]{
	\includegraphics[scale=1.075]{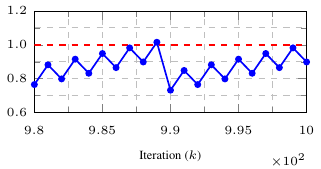} 					
     \label{fig:example_sim_dual_dual}
	 } \hspace{-0.325cm}
    \subfigure[{Policy $\pi(s_t=R_1, \lam_k)$.}]{
	\includegraphics[scale=1.075]{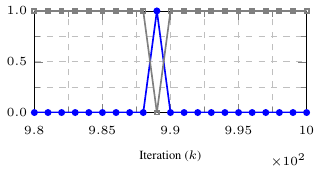} 					
     \label{fig:example_sim_dual_policy}
	 }
	\caption{Execution of A-CRL for Example \ref{ex_monitoring}. Values for average accumulated reward $r_1$ and dual variables $\lam_1$ are shown in Figures \ref{fig:example_sim_dual_v} and \ref{fig:example_sim_dual_dual}.
	Figure \ref{fig:example_sim_dual_policy} shows the policy $\pi(s_t=R_1, \lam_k)$. Probability of staying at $R_1$ (blue circles) and the complementary probability of jumping to $R_0$ (gray squares) are shown. We run $1{,}000$ rollouts with length $T_0=10$ and step size $\eta_\lambda=0.5$. Figures show last 20 rollouts. A-CRL does not converge to $\pi^\star$ but $\pi(\lam_k)$ generates a trajectory that is feasible [cf. \eqref{eqn_theo_feasibility_informal}] and nearly optimal [cf. \eqref{eqn_theo_optimality_informal}].
	}	    
	\label{fig:example_sim_dual} 
\end{figure*}

\subsection{Constrained Reinforcement Learning and Duality} \label{sec_relationship}

A candidate culprit for the observation in Remark \ref{rmk_cmdp_pmdp_not_equivalent} is lack of convexity. The functions involved in the constrained MDP in \eqref{P:constrainedRL} are not concave in the policy $\pi$. We will see here that this is not a problem because despite their lack of convexity, constrained MDPs have null duality gap~\cite{paternain2019constrained}. Formally, recall the definition of the dual function $d(\lambda)$ as the maximum of the Lagrangian 
\begin{align}\label{eqn_crl_dual_function}
   d(\lambda) & \triangleq \max_\pi \calL(\pi,\lambda) 
   = \calL(\pi(\lambda), \lambda),
\end{align}
where the Lagrangian maximizers are denoted by $\pi(\lambda)$ [cf. \eqref{eqn_lagrangian_maximizers} and \eqref{eqn_lagrangian_maximizers_rl}]. By its definition, for any $\lambda \in \setR^m_+$, the dual function provides an upper bound on the optimal value~$P^\star$ of \eqref{P:constrainedRL}. The dual problem aims to find the tightest of these bounds by minimizing the dual function,
\begin{align}\label{eqn_crl_dual_problem}\tag{D-CRL}
   \lambda^\star 
       \in \argmin_{\lambda \in \setR^m_+}  d(\lambda)
      = \argmin_{\lambda \in \setR^m_+}  \calL(\pi(\lambda), \lambda),    
\end{align}
and wherein the optimal dual value is denoted by $D^\star \triangleq d(\lambda^\star)$. In convex optimization problems we know that the primal and dual problems are equivalent in the sense that they attain the same optimal values. The constrained MDP in \eqref{P:constrainedRL} is not a convex optimization problem. Still, it has a particular structure that renders it equivalent to \eqref{eqn_crl_dual_problem} as we formally assert next. 

\begin{theorem}[Strong Duality \cite{paternain2019constrained}]\label{theo_no_gap}
The constrained reinforcement learning problem in \eqref{P:constrainedRL} and its dual form in \eqref{eqn_crl_dual_problem} are such that 
\begin{align}
    V_0(\pi^\star) = P^\star = D^\star = d(\lambda^\star).
\end{align} \end{theorem}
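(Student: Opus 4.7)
The plan is to establish strong duality not through convexity of the objective or constraint functionals themselves (which fails, as noted in the discussion preceding the statement), but through convexity of the set of \emph{achievable value vectors}. Concretely, I would introduce the set
\begin{equation*}
\calC \triangleq \left\{ (v_0,v_1,\ldots,v_m) \in \setR^{m+1} : \exists \pi \text{ with } V_0(\pi)=v_0,\ V_i(\pi) \geq v_i,\ i=1,\ldots,m \right\}
\end{equation*}
obtained from the value-function image of the policy space together with free disposal in the constraint coordinates. The proof then reduces to showing that $\calC$ is convex, that the point $(P^\star + \epsilon, c_1,\ldots,c_m)$ lies outside $\calC$ for every $\epsilon>0$, and that a hyperplane separating this point from $\calC$ produces the desired dual multipliers.

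The central technical step is establishing convexity of $\calC$. For any two policies $\pi_1,\pi_2$ and any $\alpha \in [0,1]$, I would construct a policy $\pi_\alpha$ achieving $V_i(\pi_\alpha) = \alpha V_i(\pi_1) + (1-\alpha) V_i(\pi_2)$ simultaneously for all $i=0,\ldots,m$. The natural route is to pass through the occupation-measure representation: in the average-reward setting, each policy $\pi_j$ induces a stationary occupation measure $\mu_j(s,a)$, and $V_i(\pi_j) = \langle \mu_j, r_i \rangle$ is linear in $\mu_j$. Since the set of feasible occupation measures is convex, $\mu_\alpha = \alpha \mu_1 + (1-\alpha)\mu_2$ is feasible and can be converted back into a (possibly randomized, state-dependent) stationary policy $\pi_\alpha$ via $\pi_\alpha(a|s) = \mu_\alpha(s,a)/\sum_{a'}\mu_\alpha(s,a')$. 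Free disposal in the constraint coordinates then closes $\calC$ under convex combinations.

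Once $\calC$ is convex, the finish is standard. Assumption~(A1) gives Slater's condition, so $(P^\star,c_1,\ldots,c_m)$ lies on the boundary of $\calC$ while $(P^\star+\epsilon,c_1,\ldots,c_m)$ is not in $\calC$. The supporting hyperplane theorem produces a nonzero normal $(\mu_0,\mu_1,\ldots,\mu_m)$ with $\mu_i \geq 0$ for $i\geq 1$, and Slater's condition rules out $\mu_0=0$, so one can normalize $\mu_0=1$ and set $\lambda_i^\star = \mu_i$. The hyperplane inequality becomes
\begin{equation*}
V_0(\pi) + \sum_{i=1}^m \lambda_i^\star \bigl(V_i(\pi)-c_i\bigr) \;\leq\; P^\star \quad \text{for every policy } \pi,
\end{equation*}
which yields $d(\lambda^\star) \leq P^\star$. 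Combined with weak duality $D^\star \geq P^\star$ and the definition $D^\star \leq d(\lambda^\star)$, this forces $P^\star = D^\star = d(\lambda^\star) = V_0(\pi^\star)$.

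The hard part is unambiguously the convexity of $\calC$. In the average-reward regime the map $\pi \mapsto V_i(\pi)$ is neither affine nor concave, and the conversion from the mixed occupation measure $\mu_\alpha$ back to an actual stationary policy requires care about recurrent classes, unichain structure, and whether $\pi_\alpha$ indeed generates $\mu_\alpha$ as its stationary occupation measure; for multichain MDPs one may need an initial-state-dependent randomization between the two original policies rather than a per-state mixture. Once that structural lemma is in place, the separation argument of the third paragraph is essentially mechanical.
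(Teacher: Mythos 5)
The paper does not actually prove this statement: Theorem~\ref{theo_no_gap} is imported wholesale from \cite{paternain2019constrained}, so there is no in-paper argument to compare against. Judged on its own terms, your proposal is correct in outline and follows essentially the same route as the cited source and the classical constrained-MDP literature: reduce strong duality to convexity of the set of achievable value vectors (equivalently, concavity of the perturbation function), which in turn rests on the occupation-measure representation making each $V_i$ linear over a convex set of measures, and then close with a supporting-hyperplane/Slater argument, where (A1) both rules out a vanishing multiplier on the objective coordinate and (together with boundedness of the dual sublevel sets, as in Lemma~\ref{lemma_bounded_lambda}) guarantees that the dual infimum is attained at some $\lambda^\star$. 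You correctly identify the genuinely hard step, the convexity of $\calC$ in the average-reward regime, and flag the right caveats (recurrent-class structure, whether the per-state mixture of occupation measures regenerates the mixed measure, multichain initial-state randomization). Two points deserve a bit more care than your sketch gives them: first, in this paper $\calS$ and $\calA$ are compact subsets of Euclidean space, so the occupation-measure argument is not a finite polytope statement but requires weak-$\ast$ compactness of the set of stationary state-action distributions and existence of the Ces\`aro limits defining $V_i$; second, in the separation step you should also build free disposal into the $v_0$ coordinate (or argue separately) to obtain $\mu_0 \geq 0$ before invoking Slater to get $\mu_0 > 0$. Neither issue changes the structure of the argument, and both are handled in the reference you are effectively reproving.
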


The equivalence of Theorem \ref{theo_no_gap} implies that the primal and dual \emph{optimal values} of the constrained MDP in \eqref{P:constrainedRL} are equal to the optimal value of the  unconstrained Lagrangian form in \eqref{eqn_lagrangian_maximizers_rl}. Thus, at a fundamental level, it would appear that constrained RL is not more difficult to solve than an unconstrained RL. One need only find the optimal Lagrange multiplier $\lambda^\star$ that renders \eqref{P:constrainedRL} and \eqref{eqn_lagrangian_maximizers_rl} equivalent and proceed to solve the unconstrained MDP in \eqref{eqn_lagrangian_maximizers_rl} with $\lambda=\lambda^\star$. This is a relatively simple task because the dual problem in \eqref{eqn_crl_dual_problem} is a convex program \cite{boyd2004convex} and therefore easy to minimize. Thus, solving a constrained reinforcement learning problem should not be more difficult than conventional reinforcement learning save for the relatively simple task of finding optimal dual variables. However, as we saw in Example \ref{ex_monitoring} and concluded in Remark \ref{rmk_cmdp_pmdp_not_equivalent}, \emph{this is not the case}. The optimal dual variable may be uninformative about the optimal policy. We explain why in the next section.

\subsection{Recovering Optimal Policies from Dual Variables}\label{sec_primal_recovery}

The key nuance that limits the use of the equivalence between \eqref{P:constrainedRL} and \eqref{eqn_lagrangian_maximizers_rl} that is implied by Theorem \ref{theo_no_gap} is this: Our goal is not to determine the optimal value $P^\star$ but to find an optimal policy $\pi^\star\in\Pi^\star$. The latter is not always easy to recover from \eqref{eqn_lagrangian_maximizers_rl} even if the optimal dual variable $\lambda^\star$ is known. To explain the complications that may arise, we introduce a proposition that relates optimal policies of \eqref{P:constrainedRL} to optimal policies of \eqref{eqn_lagrangian_maximizers_rl}.

\begin{proposition}\label{prop_primal_recovery} The set $\Pi^\star$ of optimal policies of \eqref{P:constrainedRL} is included in the set $\Pi(\lam^\star)$ of Lagrangian maximizing policies of \eqref{eqn_lagrangian_maximizers_rl} with $\lam=\lam^\star$,
\begin{align}\label{eqn_set_maximizers}
   \Pi^\star \subseteq \Pi(\lam^\star).
\end{align}
\end{proposition}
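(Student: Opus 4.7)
The plan is to chain two inequalities around the quantity $\mathcal{L}(\pi^\star,\lambda^\star)$ and use strong duality (Theorem \ref{theo_no_gap}) to force them both to be equalities, which will show $\pi^\star \in \Pi(\lambda^\star)$.

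First I would fix an arbitrary $\pi^\star \in \Pi^\star$ and expand
\begin{align*}
\mathcal{L}(\pi^\star,\lambda^\star) = V_0(\pi^\star) + \sum_{i=1}^m \lambda_i^\star \bigl(V_i(\pi^\star) - c_i\bigr).
\end{align*}
Since $\pi^\star$ is feasible for \eqref{P:constrainedRL}, each term $V_i(\pi^\star)-c_i \geq 0$, and since $\lambda^\star \in \setR^m_+$ each $\lambda_i^\star \geq 0$. Hence the summation is nonnegative, giving $\mathcal{L}(\pi^\star,\lambda^\star) \geq V_0(\pi^\star) = P^\star$.

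Next, by the very definition of the dual function, $d(\lambda^\star) = \max_\pi \mathcal{L}(\pi,\lambda^\star) \geq \mathcal{L}(\pi^\star,\lambda^\star)$. Combining with the previous inequality yields $d(\lambda^\star) \geq \mathcal{L}(\pi^\star,\lambda^\star) \geq P^\star$. Now I invoke Theorem \ref{theo_no_gap}, which gives $d(\lambda^\star) = D^\star = P^\star$, so both inequalities collapse to equalities. In particular, $\mathcal{L}(\pi^\star,\lambda^\star) = d(\lambda^\star) = \max_\pi \mathcal{L}(\pi,\lambda^\star)$, which by definition of $\Pi(\lambda^\star)$ means $\pi^\star \in \Pi(\lambda^\star)$. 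Since $\pi^\star \in \Pi^\star$ was arbitrary, this establishes the claimed inclusion.

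There is essentially no hard step here: the whole argument is a routine consequence of strong duality plus primal/dual feasibility (equivalently, a complementary-slackness observation, since the second inequality collapse also gives $\sum_i \lambda_i^\star(V_i(\pi^\star)-c_i) = 0$). The only substantive ingredient is Theorem \ref{theo_no_gap}, which is cited from \cite{paternain2019constrained} and not reproved here. It is worth emphasizing in the exposition that the inclusion is generally strict — $\Pi(\lambda^\star)$ may contain Lagrangian maximizers that are not feasible or not optimal for \eqref{P:constrainedRL} — which is precisely the phenomenon highlighted by Remark \ref{rmk_cmdp_pmdp_not_equivalent} and motivates the state-augmented algorithm.
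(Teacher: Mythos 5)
Your proposal is correct and follows essentially the same argument as the paper's proof: sandwich $\calL(\pi^\star,\lambda^\star)$ between $P^\star$ (via primal feasibility and $\lambda^\star \in \setR^m_+$) and $d(\lambda^\star)$ (via the definition of the dual function), then invoke Theorem \ref{theo_no_gap} to collapse both inequalities and conclude $\pi^\star \in \Pi(\lambda^\star)$. The only difference is cosmetic ordering of the two inequalities, plus your added (correct) remark on complementary slackness.
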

\begin{proof}
See Appendix \ref{app:prop_primal_recovery}.
\end{proof}
As per Proposition \ref{prop_primal_recovery}, the issue is that $\Pi(\lam^\star)$ is a set. Optimal policies $\pi^\star\in\Pi^\star$ are included in this set. But the two sets are not equivalent. There may be policies in $\Pi(\lam^\star)$ that are not optimal. This is the case of Example \ref{ex_monitoring} in which the set $\Pi(\lam^\star)$ contains all policies. This illustrates that, in some cases, finding a policy in the set $\Pi(\lam^\star)$ that is optimal for \eqref{P:constrainedRL} is as difficult as solving \eqref{P:constrainedRL} itself. The two problems are equivalent in this particular example.

An optimal policy $\pi^\star$ is readily recoverable from $\Pi(\lam^\star)$ when the latter set is a singleton. In such case \eqref{eqn_set_maximizers} implies the equivalence $\Pi^\star \equiv \Pi(\lam^\star)$ and policies $\pi(\lam_k)$ approach $\pi^\star$ when $\lam_k$ is updated as per \eqref{eqn_stochastic_dual_descent_specified}. The A-CRL algorithm is not necessary in this case, although it would still work. 

However, given that there is nothing contrived about Example \ref{ex_monitoring} it is reasonable to expect that problems where $\Pi(\lam^\star)$ is not a singleton are widespread. This expectation can be asserted by recalling that: (i) unconstrained MDPs \emph{always} admit a deterministic policy $\pi$ as optimal \cite{puterman1994markov}; (ii) constrained MDPs \emph{often} admit nondeterministic policies as solutions \cite{altman1999constrained}. The only way to reconcile the existence of a deterministic policy in the set $\Pi(\lam^\star)$ and the presence of a nondeterministic optimal policy $\Pi^\star$ with Proposition \ref{prop_primal_recovery} is for the set $\Pi(\lam^\star)$ to contain multiple policies. In these situations, A-CRL is necessary to find solutions of \eqref{P:constrainedRL}, in the sense of generating trajectories that satisfy \eqref{eqn_theo_feasibility_informal} and \eqref{eqn_theo_optimality_informal}. We illustrate this point next.

\begin{figure*}[t!]
    \centering
    \subfigure[{Constraint satisfaction $V_1(\pi)$.}]{
	\includegraphics[scale=1.075]{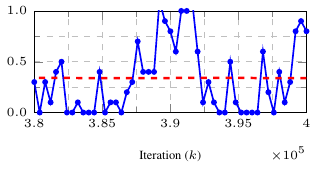} 					
     \label{fig:example_pd_v}
	 }
    \subfigure[{Dual variable $\lambda_{1,k}$.}]{
	\includegraphics[scale=1.075]{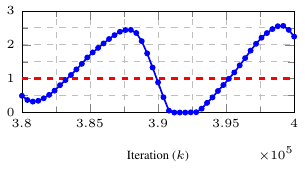} 					
     \label{fig:example_pd_dual}
	 }
    \subfigure[{Policy $\pi(s_t=R_1, \lam_k)$.}]{
	\includegraphics[scale=1.075]{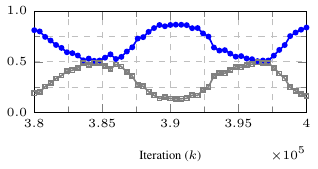} 					
     \label{fig:example_pd_policy}
	 }
	 
	\caption{Primal-dual method for Example \ref{ex_monitoring}. Figures \ref{fig:example_pd_v}\textendash\ref{fig:example_pd_policy} are analogous to Figures \ref{fig:example_sim_dual_v}\textendash\ref{fig:example_sim_dual_policy}. We run $400{,}000$ rollouts with length $T_0=10$, dual step $\eta_\theta=0.0025$ and primal step $\eta_\theta=0.025$. Primal-dual does not learn the optimal policy although it generates \emph{trajectories} that are optimal on average if executed online, as is the case of A-CRL. Executing primal-dual methods online is challenging.}	    
    \label{fig:example_sim}
\end{figure*}

\subsection{Policy Switching and Memory}\label{sec_policy_switching}

\begin{figure*}[t!]
    \centering
    \subfigure[{Policy for $\lambda=[5,0,0,0]$.}]{
	\includegraphics[scale=1.075]{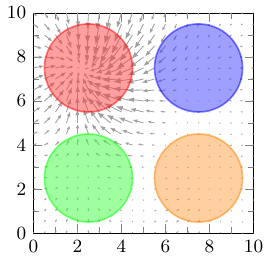} 	
     \label{fig:quiver_high_low}
	 }
    \subfigure[{Policy for $\lambda=[0,5,5,0]$.}]{
	\includegraphics[scale=1.075]{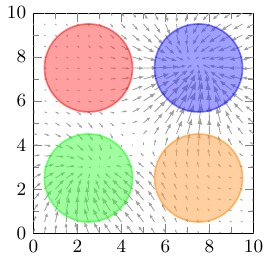} 		
     \label{fig:quiver_mid_mid}
	 }
    \subfigure[Occupation measure.]{
	\includegraphics[scale=1.075]{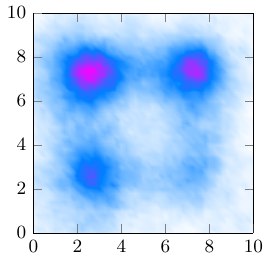} 			
     \label{fig:heatmap}
	 }	 
	\caption{Policy $\pi_{\theta}(\lambda)$ after $1{,}000{,}000$ training iterations, shown for two possible values of $\lambda$. The state occupation measure resulting from an agent executing $\pi_{\theta}(\lambda)$ coupled with the dual dynamics \eqref{eqn_stochastic_dual_descent_specified} is shown on the right.}
    \label{fig:quiver}
\end{figure*}

In the training stage of A-CRL, Algorithm \ref{algorithm} solves the augmented MDP of Figure \ref{fig_blcok_diagram}. Assuming that training succeeds, this is equivalent to finding Lagrangian maximizing policies $\pi(\lam)$. As explained in Section \ref{sec_primal_recovery} there is no guarantee that the optimal policy $\pi^\star$ is recoverable from $\pi(\lam^\star)$. Thus, no guarantee that A-CRL converges to $\pi^\star$ either. Nevertheless, A-CRL guarantees that \eqref{eqn_theo_feasibility_informal} and \eqref{eqn_theo_optimality_informal} hold. This is attained through \emph{policy switching.} Step 4 of Algorithm \ref{algorithm_exec} updates multipliers as per \eqref{eqn_stochastic_dual_descent_specified} in order to effect the execution of different policies  $\pi(\lam_k)$ in Step 3.

 Figure \ref{fig:example_sim_dual} illustrates policy switching for Example \ref{ex_monitoring}, showing variables of A-CRL related to state $R_1$. Figure \ref{fig:example_sim_dual_v} shows the average accumulated reward $r_1$ and Figure \ref{fig:example_sim_dual_dual} shows the value of the Lagrange multiplier $\lam_1$. Figure \ref{fig:example_sim_dual_policy} illustrates the policy $\pi(s_t=R_1, \lam_k)$. It shows the probability of staying at $R_1$ (blue circles) and the complementary probability of jumping to $R_0$ (gray squares). In this latter figure, policy $\pi(s_t=R_1, \lam_k)$ jumps from staying at $R_1$ with probability $0$ to staying at $R_1$ with probability $1$. None of these two policies solve Example \ref{ex_monitoring}. This fact is consistent with the impossibility of recovering the optimal policy from any given set of multipliers (Section \ref{sec_primal_recovery}). This is also something we have already highlighted in Remark \ref{rmk_cmdp_pmdp_not_equivalent}. Maximizing \eqref{eqn_monitoring_weighted_reward} requires maximizing the time spent at a particular state $R_i$ depending on the values of the multipliers $\lam_i$. 
 
That the policies in Figure \ref{fig:example_sim_dual_policy} do not solve Example \ref{ex_monitoring} is attested by the accumulated reward plot (solid blue) in Figure \ref{fig:example_sim_dual_v}. The requirement is for this reward to accumulate to at least $c=1/3$ and the optimal policy requires that this reward accumulate to exactly $c=1/3$. This does not happen at any iteration. However, the time average of the accumulated reward across all rollouts does accumulate to exactly $1/3$ (dashed red). This is as it should be. It is the claim of Theorem \ref{theo_main_intro} that this time average should satisfy the constraints with probability $1$ if we execute A-CRL.

Figure \ref{fig:example_sim_dual_dual} illustrates how the Lagrange multiplier controls the policy switch. As it follows from \eqref{eqn_monitoring_weighted_reward}, when $\lam_1<1$  the Lagrangian maximizing policy requires jumping back to $R_0$ with probability $1$. When the multiplier switches to $\lam_1>1$ the Lagrangian maximizing policy switches to one that requires staying at $R_1$ with probability $1$\textemdash we must also have $\lam_2>\lam_1$, this holds but is not shown. In Figure \ref{fig:example_sim_dual_dual} the switching from $\lam_1<1$ to $\lam_1>1$ happens for $k=989$. This is precisely the same moment when the policies switch in Figure \ref{fig:example_sim_dual_policy}. This highlights the instrumental role of state augmentation in solving CRL problems. It is the policy switching controlled by $\lam_i$ that generates trajectories that guarantee feasibility and near-optimality (Theorem \ref{theo_main_intro}).

\textbf{Memory.~} The Lagrange multipliers $\lambda_{i, k}$ in A-CRL accept an interpretation in terms of memory. To make this clearest assume that $\lambda_{i, 0} = 0$ and that the nonnegative projection operators in \eqref{eqn_stochastic_dual_descent_specified} is never applied\textemdash which is true in Figure \ref{fig:example_sim_dual_dual}. With these assumptions we can rewrite \eqref{eqn_stochastic_dual_descent_specified} as
\begin{align}\label{eqn_dd_ex1}
   \lambda_{i, k} 
      = \frac{\eta_\lambda}{T_0}
           \sum_{t= 0}^{kT_0-1} 
              \Big(c - \indicator ( s_t = R_i )\Big) .
\end{align}
Thus, the multiplier $\lam_i$ records the deficit in the amount of time the agent spends at $R_i$ relative to the requirement $c$. When this deficit grows large, the multiplier triggers a policy switch to reduce it. This observation holds true in general. The multiplier update in \eqref{eqn_stochastic_dual_descent_specified} is such that $\lam_{i,k}$ memorizes the constraint deficit. As these deficits rise and fall, they control the choice of policies $\pi(\lam_k)$ to generate trajectories that satisfy \eqref{eqn_theo_feasibility_informal} and \eqref{eqn_theo_optimality_informal}. The nonnegative projection operator limits the future credit that an agent is given for over-satisfying the constraint at a particular rollout. 

\textbf{Primal Averaging. } An alternative but rather inefficient means to solve \eqref{P:constrainedRL} would be primal averaging. The optimal policy $\pi^\star$ can be recovered from primal averaging \cite{anstreicher2009two,nedic2009approximate}. Indeed, the average policy
\begin{align}\label{eqn_primal_averaging}
    \bar\pi_K  = \frac{1}{K} \sum_{k=0}^{K-1} \pi(\lam_k),
\end{align}
consists of the combination up to a time horizon $K$ of the Lagrangian maximizing policies sampled while running dual dynamics \eqref{eqn_stochastic_dual_descent_specified}. If the horizon is sufficiently large, drawing actions from this average policy generates a sequence with the same ergodic properties of the sequence generated by A-CRL. Since the guarantees of Theorem \ref{theo_main_intro} hold for the latter, they also hold for a trajectory that is sampled from $\bar\pi_K$. Primal averaging provides an alternative to A-CRL that is applicable to some problems but not all. In general, policies $\pi(\lam_k)$ are parametrized as in Algorithms \ref{algorithm} and \ref{algorithm_exec}. However, \eqref{eqn_primal_averaging} is averaged over \emph{policies}, not \emph{parameterizations}. Finding a parametric description of the average policy $\bar\pi_K$ other than the impractical storage of the whole sequence of parameters is a difficult problem in itself.

\subsection{Primal-Dual Methods}\label{sec_primal_dual}

A-CRL uses gradient descent in the dual domain and policy optimization in the primal domain. Primal-dual methods use gradient descent in the dual domain and gradient ascent in the primal domain. That is, the maximization step in \eqref{eqn_lagrangian_maximizers_rl} is replaced by the gradient ascent step,
\begin{align}\label{eq:pd_equations}
   \theta_{k+1}&= \theta_{k} + \eta_\theta  \widehat{\nabla}_{\theta} \calL(\theta_k,\lambda_k) .
\end{align}
The trajectories generated by primal-dual methods solve CRL in the same time average sense of Theorem \ref{theo_main_intro}\cite{tessler2018reward, ding2020natural}. This is illustrated in Figure \ref{fig:example_sim} where we show the same variables of Figure \ref{fig:example_sim_dual}  for a primal-dual method. The same switching behavior is apparent. Multipliers rise and fall [Figure \ref{fig:example_pd_dual}] triggering policy switches [Figure \ref{fig:example_pd_policy}]. These policies are instantaneously not optimal, but they satisfy the constraints on average [Figure \ref{fig:example_pd_v}]. The important consequence is that, in general, we cannot stop a primal-dual method at iteration $k$ and claim that the policy $\pi_{\theta,k}$ is close to optimal. As is the case of A-CRL, the primal-dual method must be run online to ensure that \emph{trajectories} are optimal on average.

While possible, running \eqref{eq:pd_equations} online is challenging. This is because online estimation of the gradient in \eqref{eq:pd_equations} is difficult. We could attempt to mitigate this problem with an actor-critic method, but one has remember that the usefulness of actor-critic hinges upon convergence to a stationary policy\textemdash something that does not hold for some CRL problems. A-CRL circumvents these challenges by augmenting the state and learning instead the Lagrangian maximizing policies in \eqref{eqn_lagrangian_maximizers_rl}.

We also call attention to the significant delay between the time at which $\lam_1>1$ [$k\approx 3.82 \times 10^5$ in Figure \ref{fig:example_pd_dual}] and the time at which the policy switches [$k\approx 3.90 \times 10^5$ in Figure \ref{fig:example_pd_policy}]. This is because as $\lam_1$ switches, the primal-dual method needs to relearn the Lagrangian maximizing policy. This delay results on $\lam_1$ growing larger than in Figure \ref{fig:example_sim_dual}. As it follows from \eqref{eqn_dd_ex1} this implies that the transient violation of constraints is larger for primal-dual methods than it is for A-CRL. Even if both satisfy the constraints on average. 

\section{Numerical Example}\label{sec_example}

Let us consider again the \emph{monitoring problem}, first introduced in Fig.~\ref{fig:example}, in which an agent must spend a specific portion of its time in certain regions of the environment. As we discussed in the simpler Example~\ref{ex_monitoring}, this is a problem for which classical reinforcement learning is unable to learn a solution. For the numerical results in this section, we study a slightly more complex version of this problem. More formally, let us consider a MDP formed by a state space $\mathcal{S}=[0,10] \times [0,10]$ representing the $x$- and $y$-axis position of the agent. The state of the agent evolves following the dynamics $s_{t+1}=s_t+T_s a_t$, where $T_s=0.05$ is the chosen sampling time of the system. The agent samples actions $a_t$ from a Gaussian policy $\pi_\theta$, the mean of which is constructed by a function approximator given by a weighted linear combination of Gaussian kernels. We choose the bandwidth of the Gaussian kernels to be equal to their spacing, with them being spaced 1 unit from each other. The weights of this function approximator are the parameters to be learned by the agent. The task is specified as a CRL problem. Specifically, setting $r_0(s_t,a_t)=0$ and $r_i(s_t,a_t)=1$ if the agent is inside the $i$-th region to be monitored. As illustrated in Figure \ref{fig:quiver}, we consider four regions to be monitored, indexed by $i\in \{1,2,3,4\}=\{ \texttt{red}, \texttt{blue}, \texttt{green}, \texttt{orange}\}$, where the colors are matched in the plots. The task requirements $c_i$ of the agent are to monitor each region at least $20\%$, $15\%$, $10\%$ and $5\%$ of its time, respectively. The agent is trained via the A-CRL algorithm described in Section \ref{sec_algorithm}. During the training phase, the RL algorithm used is a policy gradient \cite{sutton2000policy} with rollouts of fixed horizon $T=20$ and step size of $\eta_{\theta}=0.001$, for $1{,}000{,}000$ iterations.

\begin{figure}[t!]
    \centering
    \subfigure[Time average value of the dual variables.]{
    \hspace{2.15ex}
	\includegraphics[scale=1]{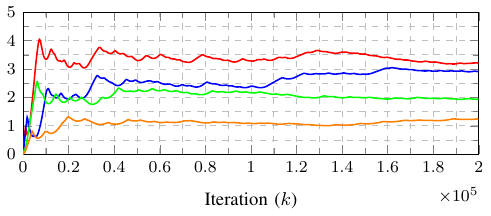} 			
     \label{fig:dual_variables}
    }
    \subfigure[Instantaneous value of the dual variables (last $10{,}000$ iterations).]{
    \hspace{2.15ex}
	\includegraphics[scale=1]{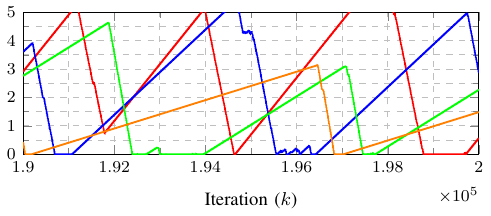} 			
     \label{fig:dual_variables_instant}
    }    
    \subfigure[Constraint satisfaction.]{
	\includegraphics[scale=1]{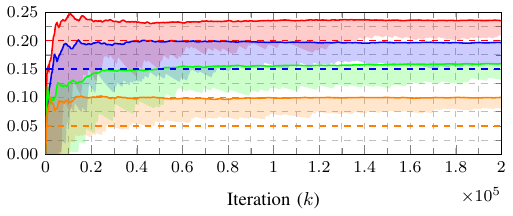} 				
     \label{fig:constraints}
	 }	 
	\caption{A-CRL algorithm. Dual variables and constraint satisfaction when executing the trained policy $\pi_{\theta}(\lambda)$ coupled with the dual dynamics \eqref{eqn_stochastic_dual_descent_specified}. The dual variables are shown for a single realization. The constraint satisfaction is averaged (solid line) over $100$ independent runs, where the shadowed band covers to the worst realization. Requirements are shown as dashed lines and the colors are set accordingly to the region they represent, following Fig. \ref{fig:quiver}.}
    \label{fig:dual_constraints}
\end{figure}

The policy $\pi_\theta(\lambda)$ after training is shown in Figure \ref{fig:quiver}. Specifically, Fig. \ref{fig:quiver_high_low} and Fig. \ref{fig:quiver_mid_mid} show the resulting policy for different values of the Lagrange multipliers $\lambda$. In the A-CRL method, the policy includes the Lagrangian variables as part of the augmented state space. When the Lagrange multiplier associated to one of the regions takes a higher value compared to the other regions, the policy moves the agent towards that region. This is the case shown in Fig. \ref{fig:quiver_high_low}, where all dual variables but $\lambda_1=5$ are set to zero. For such case, the policy is to move towards the red ($i=1$) monitoring area. Similarly, we show in Fig. \ref{fig:quiver_mid_mid} the policy when two of the Lagrangian multipliers take large values, and the other regions are kept to zero. In this case, the policy is to move into any of the two regions, blue ($i=2$) or green ($i=3$). Similar behavior can be observed when setting the Lagrange multipliers of the other regions accordingly. 

It is important to remember that no dual dynamics are used during training. Recall that the policy acquired during training maximizes the Lagrangian of the problem [cf. \eqref{eqn_lagrangian}] for all possible Lagrange multipliers. Now, consider an agent executing this policy. We use the requirements $c_i$ and generate Lagrange multipliers $\lambda_{i,k}$ following the dual dynamics, as in \eqref{eqn_stochastic_dual_descent_specified}, using a step size $\eta_{\lambda}=0.01$ and an epoch length of $T_0=1$. We compute the state occupation measure shown in Fig. \ref{fig:heatmap}. When executing the A-CRL algorithm, the agent spends more in in areas with higher demands, while barely spending any time in the areas with no demands (outside of the monitoring regions).

A more detailed understanding regarding the underpinnings of the A-CRL approach can be attained by examining the values of the dual variables and the constraint satisfaction (the percentage of time spent in each of the monitoring areas). These are shown in Figure \ref{fig:dual_constraints}. For the dual variables, we show in Fig. \ref{fig:dual_variables} their time averages for a single run, and in Fig. \ref{fig:dual_variables_instant} their instantaneous values during the last $10{,}000$ iterations, wherein the switching behavior discussed in \ref{sec_policy_switching} can be observed. Requirements more difficult to satisfy lead to higher values of their respective dual variables. More precise information is shown by the constraint satisfaction, plotted in Fig. \ref{fig:constraints}. In this case, we plot the results of $100$ independent realizations, each one over $200{,}000$ time steps. The monitoring requirements are shown by a dashed line, the average constraint satisfaction by a solid line and the shaded area represents trajectories covering up to the worst realization. In general, the average constraint satisfaction for each monitoring region is well over their respective specification. More importantly, all trajectories satisfy the problem requirements, as Theorem \ref{theo_main_intro} guarantees.

\begin{figure}[t!]
    \centering
    \subfigure[Dual variables. Instantaneous (solid) and average (dashed) values shown.]{
	\includegraphics[scale=1]{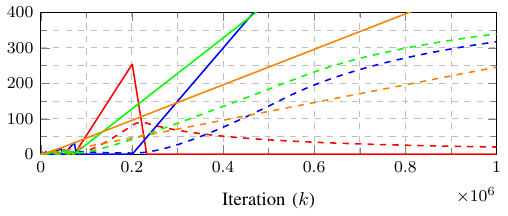} 			
     \label{fig:dual_variables_pd}
    }
    \subfigure[Constraint satisfaction. Requirements shown as dashed lines.]{
	\includegraphics[scale=1]{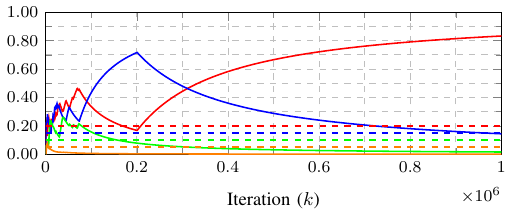} 				
     \label{fig:constraints_pd}
	 }	 
	\caption{Primal-dual algorithm. Dual variables and constraint satisfaction resulting from executing a primal-dual algorithm as described in Section \ref{sec_primal_dual}. A single realization is shown with colors matching the regions in Fig. \ref{fig:quiver}. The primal-dual method fails to solve the monitoring problem.}
    \label{fig:dual_constraints_pd}
\end{figure}

The success of the A-CRL algorithm is in contrast to the failure to solve the monitoring problem by using a vanilla primal-dual algorithm. We discussed previously, for the simple three-state MDP of Example \ref{ex_monitoring}, the lack of convergence of primal-dual methods in the CRL setting, resulting in oscillations and requiring continuous online execution (See Fig. \ref{fig:example_sim}). For the more complex case studied in this section, despite training the primal-dual method for the same number of iterations as the A-CRL algorithm ($1{,}000{,}000$ iterations), as shown in Fig. \ref{fig:dual_constraints_pd}, the primal-dual method is unable to generate trajectories satisfying the problem constraints. 

\section{Conclusions}

In this paper, we have studied CRL problems. The main motivation behind this work has been the observation that CRL and regularized RL problems are not equivalent, in the sense that the latter are unable to specify tasks that the former can. This concern also extends to dual approaches, precluding their guaranteed converge to optimal CRL policies. As illustrated by Example \ref{ex_monitoring}, a large class of complex behaviors, which we normally associate with autonomous agents, cannot be learned with current reinforcement learning methodologies. To overcome these limitations we have introduced the A-CRL algorithm. This proposed method, uses the dual variables as part of an augmented state space over which Lagrangian maximizing policies are learned. During execution, these policies are coupled with dual dynamics, producing trajectories which are guaranteed to be feasible and near-optimal.

\appendices  
  
\section{Proof of Theorem 1}\label{sec_unbiased}

In this section, we present the proofs of Theorem \ref{theo_main_intro}. We start with definitions that will be used throughout the appendix. We start by defining the following probability space $(\Omega,\calF,\mathbb{P})$, where the sample space is given by $\Omega = \calS\times\calA\times\mathbb{R}^m_+$, the event space $\calF$ corresponds the Borel $\sigma$-algebra, and $\mathbb{P} : \calF \to [0,1]$ is the probability function. Further, we define a filtration $\left\{\calF_k\right\}_{k\geq 0}$ where  $\calF_k\subset \calF$ is an increasing sequence of $\sigma$-algebras, i.e., if $k_2>k_1$ then $\calF_{k_1}\subset \calF_{k_2}$, with $\calF_0=\left\{\emptyset, \Omega\right\}$ and $\calF_{\infty} = \calF$. In particular $\calF_k$ are such that $\lambda_k$ is $\calF_k$-measurable. 

Preliminaries aside, let us denote by $\{p(\lambda_{k} | \lambda_0)\}_{ k\geq 0}$ the sequence of probability measures defined by the transitions of the Lagrange multipliers according to \eqref{eqn_stochastic_dual_descent_specified}. The bulk of the proof of Theorem \ref{theo_main_intro} can be reduced to the fact that the aforementioned sequence is tight. We formally define the notion of tightness next. 
\begin{definition}\label{def_tight}
We say that a sequence of probability measures $\{p(\lambda_{k} | \lambda_0)\}$ is tight, if for any $\delta>0$, there exists a compact set $\calK_\delta$ such that for every $k\geq 0$ 
\begin{equation}
    \mathbb{P}\left(\lambda_k \in \calK_\delta\right) >1-\delta.
\end{equation}
\end{definition}
Since the dual multipliers follow dynamics that resemble dual descent on a convex function, it should not come as a surprise that they belong to a compact set with high probability; and hence, why the sequence of measures is tight. A formal analysis of these arguments will be presented next.

In Section \ref{sec_feasibility} we establish that if the sequence of probability measures $\{p(\lambda_{k} | \lambda_0)\}$ is tight, then the feasibility claim of Theorem \ref{theo_main_intro} holds. Then, in Section \ref{sec_optimality}, having established the feasibility of the sequence, we focus on proving the optimality claim of Theorem \ref{theo_main_intro}. Then, for the proof to be complete, we devote Section \ref{sec_tight1} to show that indeed the sequence of probability measures is tight. 
 
\subsection{Feasibility Guarantees}\label{sec_feasibility}

In this section we establish the feasibility guarantees. In particular, we establish that if the sequence of measures $\{p(\lambda_{k} | \lambda_0)\}$ is tight, then expression \eqref{eqn_theo_feasibility_informal} holds. We state this result formally in the following proposition.
\begin{proposition}\label{prop_feasibility}
Consider the sequence of probabilities measures $\{p(\lambda_{k} | \lambda_0)\}$ defined by the dual update \eqref{eqn_stochastic_dual_descent_specified}. If the sequence $\{p(\lambda_{k} | \lambda_0)\}$ is tight, the state-action trajectories $(s_t,a_t)$ are feasible with probability one, i.e.,
\begin{align}
    \lim_{T\to\infty} \frac{1}{T}\sum_{t=0}^T r_i(s_t,a_t) \geq c_i, 
    \quad \text{~a.s.}
\end{align}
\end{proposition}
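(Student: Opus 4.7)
The strategy is to use the nonnegative-projection dual update as a telescoping device that lower bounds the running average of constraint slacks by the endpoint multiplier, and then to invoke tightness to show that this endpoint grows sublinearly almost surely.

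Concretely, since $[x]_+ \geq x$, the recursion \eqref{eqn_stochastic_dual_descent_specified} gives $\lambda_{i,k+1} \geq \lambda_{i,k} - (\eta_\lambda/T_0)\sum_{t=kT_0}^{(k+1)T_0-1}(r_i(s_t,a_t)-c_i)$. Summing over $k = 0,\dots,K-1$, setting $T = T_0 K$, and using $\lambda_{i,K} \geq 0$, yields
\begin{equation*}
\frac{1}{T}\sum_{t=0}^{T-1}\big(r_i(s_t,a_t) - c_i\big) \ \geq\ \frac{\lambda_{i,0} - \lambda_{i,K}}{\eta_\lambda K} \ \geq\ -\frac{\lambda_{i,K}}{\eta_\lambda K}.
\end{equation*}
Taking $\liminf_{T\to\infty}$ reduces the feasibility statement to showing $\limsup_K \lambda_{i,K}/K = 0$ almost surely. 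A general $T$ not a multiple of $T_0$ is absorbed by writing $T = T_0 K + r$ with $0 \leq r < T_0$ and noting that the residual $r$ terms contribute at most $T_0(B + |c_i|)$ in absolute value by (A2), which vanishes after dividing by $T$.

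It remains to deduce $\lambda_{i,K}/K \to 0$ almost surely from tightness. Tightness supplies, for every $\delta > 0$, a uniform $M_\delta$ with $\mathbb{P}(\lambda_{i,K} > M_\delta) < \delta$, so $\mathbb{P}(\lambda_{i,K}/K > \epsilon) < \delta$ once $K > M_\delta/\epsilon$; this already gives $\lambda_{i,K}/K \to 0$ in probability. To promote this to an almost sure statement I would exploit two ingredients: by (A2) the per-step increment $|\lambda_{i,k+1} - \lambda_{i,k}|$ is deterministically bounded by $\eta_\lambda B$, and along a geometric subsequence $K_n = 2^n$ a first Borel--Cantelli argument would yield $\lambda_{i,K_n}/K_n \to 0$ a.s.\ provided a summable tail estimate $\sum_n \mathbb{P}(\lambda_{i,K_n} > \epsilon K_n) < \infty$ can be extracted from the sharper analysis of Section~\ref{sec_tight1}; the bounded-increment property then interpolates between consecutive $K_n$ and transfers the limit to the full sequence. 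An attractive alternative, since $\lambda_k$ is Markov, is to combine tightness with Feller continuity to invoke Krylov--Bogolyubov for an invariant measure and apply a Harris-type ergodic theorem to obtain the same a.s.\ control directly as a statement about time averages.

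The bulk of the argument is routine telescoping and boundary bookkeeping; the real obstacle is the passage from tightness (boundedness in probability of $\lambda_{i,K}$) to the almost sure sublinearity of $\lambda_{i,K}/K$. Tightness alone only delivers the in-probability version, and closing the gap requires either the Borel--Cantelli-plus-bounded-increment route above (leaning on stronger tail estimates yet to be proved) or an ergodic-theoretic treatment of the Markov chain of dual variables.
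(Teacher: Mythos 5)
Your telescoping inequality reproduces the first half of the paper's argument, but the decisive step is left open in your proposal, and you say so yourself: you try to establish $\limsup_K \lambda_{i,K}/K = 0$ almost surely as a standalone consequence of tightness, and the two routes you sketch for doing so (summable tails plus Borel--Cantelli along $K_n = 2^n$, or Krylov--Bogolyubov/Harris ergodicity) rest on estimates and hypotheses that are neither proved by you nor available in the paper -- the tightness analysis of Section~\ref{sec_tight1} only delivers Markov-type bounds via a supermartingale argument, not summable tails, and no Feller or irreducibility assumptions are made. The paper closes the argument differently: it proceeds by contradiction. Suppose with probability $\beta>0$ the time average violates constraint $i$ by a fixed margin $\varepsilon$. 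Feeding this into the very inequality you derived, recursed from $\lambda_{i,0}$ rather than rearranged into a time average, shows that on this event $\lambda_{i,k+1} \geq \lambda_{i,0} + \eta_\lambda(k+1)\varepsilon'$ along a subsequence: the violation event itself manufactures linear growth of the multiplier on a set of probability $\beta$, and this divergence is what is played off against Definition~\ref{def_tight}. No a priori tail or ergodic control of the dual chain is needed; the contradiction hypothesis supplies the growth.

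That said, your discomfort about passing from an in-probability statement about the marginals $p(\lambda_k\mid\lambda_0)$ to an almost-sure statement about the path is pointing at a genuine subtlety, and it is precisely where the paper's own write-up is terse: from ``with probability $\beta$ the path leaves every compact set infinitely often'' one cannot in general extract a deterministic subsequence along which $\mathbb{P}(\lambda_{k_j}\notin\calK)$ stays bounded away from zero (independent spikes of height $k$ occurring with probability $1/k$ give a tight sequence whose paths are a.s. unbounded). What rescues the argument here is structure you already noticed but used only for interpolation: the growth forced by the contradiction hypothesis is \emph{linear} in $k$, while each epoch moves $\lambda_{i,k}$ by at most $\eta_\lambda B$ by (A2), so on the bad event the multiplier must remain above any fixed level $M$ for a positive fraction of the indices $j\leq k$, for infinitely many $k$; averaging these indicators and applying reverse Fatou yields $\sup_k \mathbb{P}(\lambda_{i,k}\geq M)\geq c\beta$ for every $M$, which contradicts tightness for any $\delta<c\beta$. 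Making that one observation explicit is all that is needed to close your gap; the heavier machinery you propose is unnecessary and, under the paper's hypotheses, unavailable.
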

\begin{proof}
Let us argue by contradiction. Assume that there exist $\beta\in (0,1]$ and $\varepsilon>0$ such that for some $i =1,\ldots,m$ we have that 
  	\begin{equation}\label{eqn_contradiction}
  	    \mathbb{P}\left(
  	\liminf_{T\to\infty} \frac{1}{T}\sum_{t=0}^{T-1}  r_i(s_t,a_t) \leq c_i-\varepsilon \right) = \beta.
  	\end{equation}
  	  We set our focus now in lower bounding the norm of the $k+1$ iterate of the dual multiplier $\lambda_{i,k+1}$ whose evolution is defined by the dual descent step \eqref{eqn_stochastic_dual_descent_specified}. Notice that projection onto the non-negative orthant is such that for any $k\geq 0$ and for any $i=1,\ldots,m$ one has that 
  \begin{align}
    \lambda_{i,k+1} &= \left[\lambda_{i,k}-\eta_\lambda \frac{1}{T_0}\sum_{t= kT_0}^{(k+1)T_0-1}(r_i(s_t,a_t)-c_i)\right]_+ \nonumber\\
    &\geq \lambda_{i,k}-\eta_\lambda \frac{1}{T_0}\sum_{t= kT_0}^{(k+1)T_0-1}(r_i(s_t,a_t)-c_i).
    \end{align}
  Applying the previous inequality recursively it follows that 
  \begin{equation}
    \lambda_{i,k+1} \geq \lambda_{i,0} -\frac{\eta_\lambda}{T_0} \sum_{t=0}^{(k+1)T_0-1} \left(r_i(s_t,a_t) -c_i\right). 
  \end{equation}
  By the definition of $\left\|\lambda\right\|_1$ one has that $\left\|\lambda\right\|_1\geq \lambda_i$ for all $i=1,\ldots,m$ and for any $\lambda \in\mathbb{R}^m_+$. Hence, it holds that
  \begin{align}
    &\limsup_{k\to\infty} \left\|\lambda_{k+1}\right\|_1 \geq \limsup_{k\to\infty} \lambda_{i,k+1}  \nonumber\\
    &\geq  \lambda_{i,0} + \limsup_{k\to\infty} \left[ -\frac{\eta_\lambda}{T_0} \sum_{t=0}^{(k+1)T_0-1} \left(r_i(s_t,a_t) -c_i\right) \right].
  \end{align}
  Using~\eqref{eqn_contradiction}, we obtain that with probability $\beta$,
  \begin{align}
   &\limsup_{k\to\infty} \left[ -\frac{\eta_\lambda}{T_0} \sum_{t=0}^{(k+1)T_0-1} \left(r_i(s_t,a_t) -c_i\right) \right]  \nonumber\\
   &=-\eta_\lambda \liminf_{k\to\infty} (k+1) \frac{1}{(k+1)T_0} \sum_{t=0}^{(k+1)T_0-1} \left(r_i(s_t,a_t) -c_i\right)
   \nonumber\\
   &\geq \liminf_{k\to\infty} \eta_\lambda (k+1) \epsilon = \infty.
  \end{align}
Hence, for any compact set $\calK$ there exists a constant $J_\calK\geq 0$ and a subsequence $\left\{k_j\right\}$ such that $\mathbb{P}\left(\lambda_{k_j}\in\calK\right) = 1-\beta$ for $j>J_\calK$. This implies that for $\delta\in(0,\beta]$, there is no compact set $\calK$ for which $\mathbb{P}(\lambda_k\in \calK)> 1-\delta$, which contradicts the fact that the sequence of probabilities $\{p(\lambda_k | \lambda_0)\}$ is tight. This completes the proof of the proposition.
\end{proof}
 
\subsection{Optimality Guarantees}\label{sec_optimality}

Having established the feasibility part of the result, we now focus on establishing its optimality. To do so, we first require the following lemma stating the ergodic complementary slackness of the Lagrange multipliers and the estimates of the subgradients \eqref{eqn_stochastic_dual_descent_specified}.
\begin{lemma}\label{lemma_complementary_slackness}
Under the assumptions of Theorem \ref{theo_main_intro}, ergodic complementary slackness holds, i.e.,
  \begin{equation}
    \limsup_{K\to\infty} \frac{1}{KT_0} \sum_{k=0}^{K-1} \mathbb{E}\left[\lambda_k^\top\sum_{t=kT_0}^{(k+1)T_0-1}\left(r(s_t,a_t)-c\right)\right] \leq \eta_\lambda \frac{B^2}{2},
  \end{equation}
where as defined in Theorem \ref{theo_main_intro}, there exists $B$ such that $|r_i(s,a) - c_i| \leq B$ for all states $s$, actions $a$ and constraints $i$.
\end{lemma}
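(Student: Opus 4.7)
The plan is a classical projected (stochastic) subgradient analysis applied to the dual update \eqref{eqn_stochastic_dual_descent_specified}. I would introduce the (scaled) stochastic subgradient $g_k \triangleq \frac{1}{T_0}\sum_{t=kT_0}^{(k+1)T_0-1}\bigl(r(s_t,a_t)-c\bigr)\in\mathbb{R}^m$, so that the update reads compactly as the projection step $\lambda_{k+1} = [\lambda_k - \eta_\lambda g_k]_+$. By assumption (A2), each coordinate satisfies $|g_{i,k}|\leq B$ almost surely, which uniformly bounds $\|g_k\|^2$ by (a constant multiple of) $B^2$; in what follows I simply write $\|g_k\|^2\leq B^2$, with the understanding that any factor of $m$ coming from summing across coordinates can be absorbed into the constant $B$ used in the statement.

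Next, I would invoke non-expansiveness of the Euclidean projection onto the nonnegative orthant to write $\|\lambda_{k+1}\|^2 \leq \|\lambda_k - \eta_\lambda g_k\|^2 = \|\lambda_k\|^2 - 2\eta_\lambda\,\lambda_k^\top g_k + \eta_\lambda^2\|g_k\|^2$. Rearranging and applying the bound on $\|g_k\|^2$ yields the per-step telescoping inequality $\lambda_k^\top g_k \leq \tfrac{1}{2\eta_\lambda}\bigl(\|\lambda_k\|^2 - \|\lambda_{k+1}\|^2\bigr) + \tfrac{\eta_\lambda B^2}{2}$, which is what drives the entire argument.

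Summing this estimate from $k=0$ to $K-1$ telescopes the squared-norm terms, producing $\sum_{k=0}^{K-1}\lambda_k^\top g_k \leq \tfrac{\|\lambda_0\|^2 - \|\lambda_K\|^2}{2\eta_\lambda} + \tfrac{K\eta_\lambda B^2}{2} \leq \tfrac{\|\lambda_0\|^2}{2\eta_\lambda} + \tfrac{K\eta_\lambda B^2}{2}$, where we drop the nonpositive $-\|\lambda_K\|^2$ contribution. Recognizing that $\lambda_k^\top\sum_{t=kT_0}^{(k+1)T_0-1}\bigl(r(s_t,a_t)-c\bigr) = T_0\,\lambda_k^\top g_k$, I would divide by $KT_0$, take expectations (which commutes with the finite outer sum), and send $K\to\infty$; the $O(1/K)$ initial-condition term vanishes, leaving exactly the claimed bound $\eta_\lambda B^2/2$.

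I do not anticipate a serious obstacle: this is the textbook telescoping estimate for projected stochastic subgradient methods on the convex dual $d(\lambda)$. The main items to be careful about are the bookkeeping of constants around $\|g_k\|^2\leq B^2$ and the fact that we need no uniform control on $\mathbb{E}\|\lambda_K\|^2$, since this term is simply discarded when forming an upper bound. Notably, neither Slater (A1) nor unbiasedness (A3) is used in this lemma; both enter elsewhere in the proof of Theorem~\ref{theo_main_intro} — (A1) in the Section~\ref{sec_tight1} tightness argument, and (A3) when converting this complementary-slackness bound into the near-optimality guarantee \eqref{eqn_theo_optimality_informal} in Section~\ref{sec_optimality}.
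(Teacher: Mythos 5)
Your proposal is correct and follows essentially the same argument as the paper: non-expansiveness of the projection, the almost-sure bound $\|g_k\|^2\leq B^2$, telescoping the squared norms, discarding $-\|\lambda_K\|^2$, dividing by $KT_0$, and passing to the limit. The only cosmetic difference is that you telescope the pathwise inequality and take expectation at the end, while the paper takes conditional expectations with respect to $\calF_k$ and iterates the recursion — equivalent here since the bound on $g_k$ holds almost surely — and your remarks about the possible factor of $m$ in $\|g_k\|^2$ and the non-use of (A1) and (A3) are consistent with the paper's treatment.
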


\begin{proof}
For notation simplicity, let us define $g_k \triangleq \sum_{t=kT_0}^{(k+1)T_0-1}\left(r(s_t,a_t)-c\right)/T_0$. Then, write the square of the dual iterates' norm at time $k+1$. Using the expression for the dual update \eqref{eqn_stochastic_dual_descent_specified} and the non-expansiveness property of the projection, one has that
  \begin{equation}
    \left\|\lambda_{k+1}\right\|^2 \leq \left\|\lambda_k-\eta_\lambda g_k \right\|^2 \leq \left\|\lambda_k\right\|^2 + \eta_\lambda^2 \left\|g_k\right\|^2-2\eta_\lambda\lambda_k^\top g_k,
  \end{equation}
  where the last inequality follows from expanding the squares. Then, using the fact that the difference $|r_i(s,a) - c_i|$ is bounded by $B$ in the previous expression and taking the conditional expectation with respect to the sigma algebra $\calF_k$ one has that
  \begin{equation}
\mathbb{E}\left[    \left\|\lambda_{k+1}\right\|^2 \mid \calF_k\right] \leq \left\|\lambda_k\right\|^2+\eta_\lambda^2B^2-2\eta_\lambda\mathbb{E}\left[\lambda_k^\top g_k\mid \calF_k\right].
\end{equation}
Applying the previous expression recursively yields
\begin{align}
\mathbb{E}\left[    \left\|\lambda_{k+1}\right\|^2 \mid \calF_0\right] &\leq \left\|\lambda_0\right\|^2+(k+1)\eta_\lambda^2B^2 \nonumber\\
&-2\eta_\lambda\sum_{\ell=0}^k\mathbb{E}\left[\lambda_\ell^\top g_\ell\mid\calF_0\right].
  \end{align}
For any random variable $X$ that is $\calF_0$ measurable we have that  $\mathbb{E}\left[X\mid\calF_0 \right]=\mathbb{E}\left[X \right]$. Using this observation, the fact that $\left\|\lambda_{k}\right\|\geq 0$ for all $k\geq 0$ and rearranging the terms in the previous expression the sum can be upper bounded by
\begin{equation}
\frac{1}{k+1}\sum_{\ell=0}^k\mathbb{E}\left[\lambda_l^\top g_l\right] \leq\frac{ \left\|\lambda_0\right\|^2}{2\eta_\lambda (k+1)}+\eta_\lambda\frac{B^2}{2}.
  \end{equation}
The proof is completed by taking the limit superior in both sides of the inequality.
\end{proof}

We are now in good condition to establish the optimality claim of Theorem \ref{theo_main_intro}. In addition to the hypothesis of Theorem \ref{theo_main_intro}, we also assume the tightness of the sequence of probabilities $\{p(\lambda_{k} | \lambda_0)\}$ induced by the update \eqref{eqn_stochastic_dual_descent_specified}. The fact that this sequence is tight under the hypothesis of Theorem \ref{theo_main_intro} is established later in Section \ref{sec_tight1}. 
\begin{proposition}\label{prop_optimality}
Let the hypothesis of Theorem \ref{theo_main_intro} hold. In addition, assume that sequence  $\{p(\lambda_{k} | \lambda_0)\}$ defined by the dual update \eqref{eqn_stochastic_dual_descent_specified} is tight. Then, the state-action trajectories $(s_t,a_t)$ are near-optimal in the following sense
\begin{align}
  \mathbb{E}\left[\lim_{T\to\infty} \frac{1}{T} \sum_{t=0}^T r_0(s_t,a_t)\right] \geq P^\star-\eta_\lambda \frac{B^2}{2}.
\end{align}
\end{proposition}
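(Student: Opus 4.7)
The approach is a weak-duality-style argument followed by Cesàro averaging. The core identity is a per-epoch lower bound on $V_0(\pi(\lam_k))$ obtained by comparing the Lagrangian evaluated at $\pi(\lam_k)$ with the Lagrangian evaluated at an optimal primal $\pi^\star$. Since $\pi(\lam_k)$ maximizes $\calL(\,\cdot\,,\lam_k)$ by definition and $\pi^\star$ is primal feasible with value $V_0(\pi^\star)=P^\star$ (Theorem~\ref{theo_no_gap}),
\begin{align*}
\calL(\pi(\lam_k),\lam_k) \;\geq\; \calL(\pi^\star,\lam_k) \;=\; P^\star + \lam_k^\top\bigl(V(\pi^\star)-c\bigr) \;\geq\; P^\star,
\end{align*}
where the last step uses $\lam_k\in\setR^m_+$ together with $V_i(\pi^\star)\geq c_i$. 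Expanding the left-hand side via the definition of $\calL$ and rearranging yields the per-epoch bound $V_0(\pi(\lam_k)) \geq P^\star - \lam_k^\top\bigl(V(\pi(\lam_k))-c\bigr)$.

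The next step is to take expectations and use (A3) twice. First, since $\lam_k$ is $\calF_k$-measurable, $\E[\lam_k^\top g_k\mid\lam_k]=\lam_k^\top\E[g_k\mid\lam_k]=\lam_k^\top\bigl(V(\pi(\lam_k))-c\bigr)$, with $g_k$ as in the proof of Lemma~\ref{lemma_complementary_slackness}. Taking total expectations in the per-epoch bound, averaging over $k=0,\ldots,K-1$, and passing to the $\liminf$ in $K$ while applying Lemma~\ref{lemma_complementary_slackness} gives
\begin{align*}
\liminf_{K\to\infty}\,\frac{1}{K}\sum_{k=0}^{K-1}\E\bigl[V_0(\pi(\lam_k))\bigr]\;\geq\; P^\star - \eta_\lambda\frac{B^2}{2}.
\end{align*}
Second, applying (A3) to $r_0$ yields $\E\bigl[\tfrac{1}{T_0}\sum_{t=kT_0}^{(k+1)T_0-1} r_0(s_t,a_t)\mid\lam_k\bigr]=V_0(\pi(\lam_k))$, so the Cesàro average on the left equals $\E\bigl[\tfrac{1}{KT_0}\sum_{t=0}^{KT_0-1} r_0(s_t,a_t)\bigr]$. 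Setting $T=KT_0$ and sending $T\to\infty$ produces the bound in a $\liminf_T\E$ form.

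The main obstacle is justifying the interchange that moves the limit inside the expectation, since the proposition is stated as $\E[\lim_T\cdot]$ rather than $\liminf_T\E[\cdot]$. Boundedness from (A2) supplies uniform integrability for the trajectory averages of $r_0$, and the tightness hypothesis on $\{p(\lam_k\mid\lam_0)\}$ confines $\lam_k$ to a compact set with controllable probability, which is what makes $\E[\lam_k^\top g_k]$ well-defined and its Cesàro averages bounded by $\eta_\lambda B^2/2$ through Lemma~\ref{lemma_complementary_slackness}. A Fatou-type argument, combined with the almost-sure feasibility established in Proposition~\ref{prop_feasibility} to pin down the behavior of $(1/T)\sum_t r_0(s_t,a_t)$ along the trajectory, then delivers the final form of the inequality~\eqref{eqn_theo_optimality_informal}.
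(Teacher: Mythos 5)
Your proof is correct and takes essentially the same route as the paper: a weak-duality lower bound $d(\lambda_k)=\calL(\pi(\lambda_k),\lambda_k)\geq P^\star$, unbiasedness (A3) to convert both $V_0(\pi(\lambda_k))$ and $\lambda_k^\top\bigl(V(\pi(\lambda_k))-c\bigr)$ into expected trajectory rewards, Lemma~\ref{lemma_complementary_slackness} to absorb the complementary-slackness term, and boundedness to pass from $\liminf_T\E[\cdot]$ to the stated $\E[\lim_T\cdot]$ form. The only (cosmetic) difference is that the paper reaches $P^\star\leq \frac{1}{K}\sum_{k}d(\lambda_k)$ by evaluating the dual at the averaged multiplier $\bar\lambda_K$ and invoking convexity, whereas you observe the per-epoch bound $d(\lambda_k)\geq P^\star$ directly, which renders that convexity step unnecessary; your closing interchange argument (bounded rewards plus the assumed existence of the limit) is at least as careful as the paper's.
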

\begin{proof}

Let $\{\lambda_k\}$ be the sequence generated by the dual descent step in \eqref{eqn_stochastic_dual_descent_specified}. Define, the average Lagrange multiplier at time $K$ as $ \bar{\lambda}_K \triangleq \frac{1}{K}\sum_{k=0}^{K-1}\lambda_k$. Since the dual function is always an upper bound on the value of the primal and it is convex \cite{boyd2004convex} we have the following chain of inequalities 
  \begin{equation}\label{eqn_aux_theo_opt}
P^\star \leq  d(\bar{\lambda}_K) \leq \frac{1}{K} \sum_{k=0}^{K-1} d(\lambda_k).
  \end{equation}
  Let $\Pi(\lambda_k)$ be the set defined in \eqref{eqn_lagrangian_maximizers}. Then, for any $\pi(\lambda_k)\in \Pi(\lambda_k)$ we can rewrite the dual function as 
 $ d(\lambda_k) = V_0(\pi(\lambda_k))+\sum_{i=1}^m\lambda_{i,k}\left(V_i(\pi(\lambda_k))-c_i\right)$. Using the fact that the estimate \eqref{eqn_unbiased} is unbiased it follows that
 \begin{align}
   & \frac{1}{KT_0}\mathbb{E}\left[ \sum_{t=0}^{(K-1)T_0-1} r_0(s_t,a_t)\right]  \nonumber\\
   & \geq P^\star - \frac{1}{K} \sum_{k=0}^{K-1}\mathbb{E}\left[\lambda_k^\top\frac{1}{T_0}\sum_{t=kT_0}^{(k+1)T_0-1} \left(r(s_t,a_t)-c\right)\right]
\end{align}
Using the ergodic complementary slackness result derived from Lemma \ref{lemma_complementary_slackness}, the limit inferior of the average expected value can be lower bounded by 
\begin{equation}
  \liminf_{T\to\infty} \frac{1}{T} \mathbb{E} \left[ \sum_{t=0}^T r_0(s_t,a_t) \right]\geq P^\star-\eta_\lambda \frac{B^2}{2}.
\end{equation}
Since the formulation of the problem \eqref{P:constrainedRL} assumes the existence of the limit in the left hand side of the previous expression, the proof is complete. 
\end{proof}

\subsection{Tightness of the Sequence the Probabilities}\label{sec_tight1}
In this section we complete the proof of Theorem \ref{theo_main_intro} by establishing that the sequence of measures induced by \eqref{eqn_stochastic_dual_descent_specified} is tight. We start the development of this section by establishing the compactness of the following set
\begin{equation}\label{eqn_bounded_lambda}
    \calD \triangleq \left\{\lambda \in \mathbb{R}^m_+\mid d(\lambda)-P^\star\leq \eta_\lambda B^2/2\right\},
\end{equation}
where $P^\star$ is the value defined in \eqref{P:constrainedRL}, $\eta_\lambda$ the stepsize in \eqref{eqn_stochastic_dual_descent_specified} and $B$ the constant defined in Theorem \ref{theo_main_intro}. By virtue of Theorem \ref{theo_no_gap} it follows that the set $\calD$ is not empty. 
\begin{lemma}\label{lemma_bounded_lambda}
Let $\pi^\dagger$ be a strictly feasible policy. This is, there exists $C>0$ such that for every $i=1,\ldots,m$, $V_i(\pi^\dagger)-c_i\geq C$. Then, the set $\calD$ defined in \eqref{eqn_bounded_lambda} is contained in the following ball
\begin{equation}
    \left\|\lambda\right\|_1 \leq \frac{P^\star -V_0(\pi^\dagger)+\eta_\lambda B^2/2}{C},
\end{equation}
 where $\left\|\lambda \right\|_1=\sum_{i=1}^m \lambda_i$.

\end{lemma}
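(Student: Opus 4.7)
The plan is a classical Slater-style bound. The dual function $d(\lambda)$ is the maximum of the Lagrangian over policies, so evaluating $\calL$ at the particular strictly feasible policy $\pi^\dagger$ gives a free lower bound on $d(\lambda)$ that is affine in $\lambda$. Combining that lower bound with the upper bound $d(\lambda) \leq P^\star + \eta_\lambda B^2/2$ that comes straight from the definition of $\calD$ pinches $\|\lambda\|_1$ into the advertised ball.

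Concretely, I would proceed in two short steps. First, since $d(\lambda) = \max_\pi \calL(\pi,\lambda)$, for every $\lambda \in \mathbb{R}^m_+$,
\[
    d(\lambda) \;\geq\; \calL(\pi^\dagger,\lambda) \;=\; V_0(\pi^\dagger) + \sum_{i=1}^m \lambda_i\bigl(V_i(\pi^\dagger) - c_i\bigr).
\]
Using $\lambda_i \geq 0$ together with the strict feasibility hypothesis $V_i(\pi^\dagger) - c_i \geq C$, every summand is at least $C\lambda_i$, so
\[
    d(\lambda) \;\geq\; V_0(\pi^\dagger) + C\,\|\lambda\|_1,
\]
where I am using $\|\lambda\|_1 = \sum_{i=1}^m \lambda_i$ (which is the usual $\ell_1$ norm because $\lambda \in \mathbb{R}^m_+$).

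Second, for any $\lambda \in \calD$, the definition of $\calD$ gives $d(\lambda) \leq P^\star + \eta_\lambda B^2/2$. Chaining these two inequalities,
\[
    V_0(\pi^\dagger) + C\,\|\lambda\|_1 \;\leq\; P^\star + \eta_\lambda B^2/2,
\]
and dividing by $C$ (which is legitimate since $C > 0$ by Assumption (A1)) produces exactly the stated bound
\[
    \|\lambda\|_1 \;\leq\; \frac{P^\star - V_0(\pi^\dagger) + \eta_\lambda B^2/2}{C}.
\]

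There is essentially no obstacle here: the entire argument is three lines of algebra. The only delicate point is that strict feasibility ($C > 0$) is indispensable, both to make the final division valid and to ensure that the lower bound on $d(\lambda)$ actually grows with $\|\lambda\|_1$. Note also that the argument is agnostic to nonconvexity of \eqref{P:constrainedRL}: it only uses the elementary inequality $d(\lambda) \geq \calL(\pi^\dagger,\lambda)$ rather than strong duality (Theorem~\ref{theo_no_gap}), so the compactness of $\calD$ is a genuinely primal statement about the Lagrangian's growth in $\lambda$.
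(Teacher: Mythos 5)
Your proof is correct and follows essentially the same route as the paper's: lower bound $d(\lambda)$ by $\calL(\pi^\dagger,\lambda) \geq V_0(\pi^\dagger) + C\|\lambda\|_1$ using strict feasibility and $\lambda \geq 0$, then combine with the defining inequality of $\calD$ and divide by $C$. No gaps.
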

\begin{proof}
Let, $\pi^\dagger$ be a strictly feasible policy such that $V(\pi^\dagger)-c\geq C>0$. The existence of such policy is guaranteed by the hypothesis of Theorem \ref{theo_main_intro}. Then, by definition of the dual function \eqref{eqn_crl_dual_function}, one can lower bound the difference $d(\lambda)-P^\star$ as %
\begin{equation}
 d(\lambda)-P^\star \geq  V_0(\pi^\dagger)+\sum_{i=1}^m\lambda_i\left( V_i(\pi^\dagger)-c_i\right) - P^\star.    
\end{equation}
Using the fact that $\pi^\dagger$ is strictly feasible and that $\lambda\in\mathbb{R}^m_+$, we can further lower bound the difference $d(\lambda)-P^\star$ as follows 
\begin{equation}
 d(\lambda)-P^\star \geq  V_0(\pi^\dagger)+C\left\|\lambda\right\|_1 - P^\star.   
\end{equation}
Thus, for every element in $\lambda\in \calD$, it follows that \begin{equation}
    \left\|\lambda\right\|_1 \leq \frac{P^\star -V_0(\pi^\dagger)+\eta_\lambda B^2/2}{C}.
\end{equation}
This completes the proof of the result.
\end{proof}

\begin{lemma}\label{lemma_tight}
Under the hypothesis of Theorem \ref{theo_main_intro}, the sequence of probabilities $\{p(\lambda_{k} | \lambda_0)\}$ is tight according to Definition \ref{def_tight}.
\end{lemma}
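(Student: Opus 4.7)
The plan is to exhibit a Foster--Lyapunov drift condition for the Markov process $\{\lambda_k\}$ generated by \eqref{eqn_stochastic_dual_descent_specified}, use it to bound $\sup_{k\geq 0}\mathbb{E}\bigl[\|\lambda_k-\lambda^\star\|^2\bigr]$ uniformly in $k$, and then conclude tightness by Markov's inequality; here $\lambda^\star$ denotes any minimizer of $d$, which exists and satisfies $d(\lambda^\star)=P^\star$ by Theorem \ref{theo_no_gap}. Set $\Phi(\lambda) = \|\lambda-\lambda^\star\|^2$. Because $\lambda^\star \in \mathbb{R}^m_+$, non-expansiveness of the projection $[\,\cdot\,]_+$ together with Assumption (A2) (which gives $\|g_k\|^2 \leq mB^2$ for $g_k = \tfrac{1}{T_0}\sum_{t=kT_0}^{(k+1)T_0-1}(r(s_t,a_t)-c)$) and Assumption (A3) (which gives $\mathbb{E}[g_k \mid \mathcal{F}_k] = V(\pi(\lambda_k)) - c$) produce
\begin{equation*}
\mathbb{E}\bigl[\Phi(\lambda_{k+1}) \,\big|\, \mathcal{F}_k\bigr] \leq \Phi(\lambda_k) - 2\eta_\lambda \bigl(V(\pi(\lambda_k))-c\bigr)^\top (\lambda_k - \lambda^\star) + \eta_\lambda^2 m B^2 .
\end{equation*}
The vector $V(\pi(\lambda_k))-c$ is a subgradient of the convex dual function $d$ at $\lambda_k$ (envelope theorem applied to $d(\lambda)=\max_\pi\mathcal{L}(\pi,\lambda)$), and since $\lambda^\star$ minimizes $d$, convexity plus the strong duality of Theorem \ref{theo_no_gap} yields $(V(\pi(\lambda_k))-c)^\top(\lambda_k-\lambda^\star) \geq d(\lambda_k) - P^\star$. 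This upgrades the bound above to the cleaner form
\begin{equation*}
\mathbb{E}\bigl[\Phi(\lambda_{k+1}) \,\big|\, \mathcal{F}_k\bigr] \leq \Phi(\lambda_k) - 2\eta_\lambda\bigl(d(\lambda_k)-P^\star\bigr) + \eta_\lambda^2 m B^2 .
\end{equation*}

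Next I would apply the argument behind Lemma \ref{lemma_bounded_lambda} at an \emph{arbitrary} $\lambda\in\mathbb{R}^m_+$: the strictly feasible $\pi^\dagger$ from Assumption (A1) yields $d(\lambda) \geq V_0(\pi^\dagger) + C\|\lambda\|_1$, hence $d(\lambda) - P^\star \geq C\|\lambda\|_1 - (P^\star - V_0(\pi^\dagger))$. Plugged into the drift, this produces a one-step conditional increment that decreases linearly in $\|\lambda_k\|_1$, so there exist constants $R, \epsilon > 0$ with $\mathbb{E}[\Phi(\lambda_{k+1})\mid\mathcal{F}_k] \leq \Phi(\lambda_k) - \epsilon$ whenever $\|\lambda_k\|_1 \geq R$. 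Meanwhile \eqref{eqn_stochastic_dual_descent_specified} combined with (A2) gives the deterministic jump bound $\|\lambda_{k+1} - \lambda_k\| \leq \eta_\lambda \sqrt{m}\,B$, so the norm Lyapunov $\Psi(\lambda) = \|\lambda - \lambda^\star\| = \sqrt{\Phi(\lambda)}$ has uniformly bounded jumps $|\Psi(\lambda_{k+1}) - \Psi(\lambda_k)| \leq \eta_\lambda\sqrt{m}\,B$, and (via Jensen applied to the $\Phi$-drift) $\Psi$ inherits an analogous negative conditional drift $\mathbb{E}[\Psi(\lambda_{k+1}) \mid \mathcal{F}_k] \leq \Psi(\lambda_k) - \epsilon'$ for $\Psi(\lambda_k)$ sufficiently large.

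These two ingredients---uniformly negative conditional drift outside a bounded set together with uniformly bounded conditional jumps---are precisely the hypotheses of Hajek's lemma (or an equivalent Foster--Lyapunov tightness result of Meyn--Tweedie type). Applying it yields $\sup_{k\geq 0} \mathbb{E}[\exp(\gamma \Psi(\lambda_k))] < \infty$ for some $\gamma>0$, and in particular $M_\star := \sup_k \mathbb{E}[\Phi(\lambda_k)] < \infty$. Given any $\delta > 0$, pick $M$ with $M_\star/M^2 < \delta$ and take $\mathcal{K}_\delta = \{\lambda\in\mathbb{R}^m_+ : \|\lambda\| \leq M + \|\lambda^\star\|\}$, which is compact. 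Markov's inequality then gives $\mathbb{P}(\lambda_k \notin \mathcal{K}_\delta) \leq \mathbb{P}(\Psi(\lambda_k) > M) \leq M_\star/M^2 < \delta$ for every $k\geq 0$, which is precisely Definition \ref{def_tight}.

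The main obstacle is the final conversion from a drift that holds only outside a compact set to a uniform-in-$k$ moment bound. A naive induction bounding $\mathbb{E}[\Phi(\lambda_{k+1})]$ in terms of $\mathbb{E}[\Phi(\lambda_k)]$ fails because, inside the bounded set, the quadratic Lyapunov $\Phi$ can increase by an amount proportional to $\sqrt{\Phi(\lambda_k)}$ per step, so the positive jumps are not controlled by the local negative drift and a direct induction yields a bound growing in $k$. Switching to the linear-growth Lyapunov $\Psi=\sqrt{\Phi}$ restores uniformly bounded jumps and makes the Foster--Lyapunov/Hajek machinery applicable; carefully verifying those hypotheses for the present sequence is the technical heart of the lemma.
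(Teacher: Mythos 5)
Your proposal is correct, but it takes a genuinely different route from the paper in the key step. Both proofs start from the same drift inequality for $\|\lambda_k-\lambda^\star\|^2$ (non-expansiveness of the projection, bounded rewards, unbiasedness plus Danskin to identify $\mathbb{E}[g_k\mid\mathcal{F}_k]$ with a subgradient of $d$, convexity of $d$ and strong duality), and both ultimately trace the negative drift back to Assumption (A1). The paper, however, never needs a uniform-in-$k$ moment bound: it defines the sublevel set $\calD=\{\lambda\ge 0: d(\lambda)-P^\star\le \eta_\lambda B^2/2\}$, which is compact by Lemma \ref{lemma_bounded_lambda}, observes that outside $\calD$ the drift term $-2\eta_\lambda(d(\lambda_k)-P^\star)+\eta_\lambda^2B^2$ is negative so that the squared distance, stopped at the hitting time $K_0$ of $\calD$, is a supermartingale, and combines this with the bounded-jump fact that re-entries into $\calD^c$ start from a bounded set; tightness then follows from Markov's inequality applied to the stopped process (the ``without loss of generality'' restart at $\lambda_0\in\calD^c$ implicitly encodes an excursion decomposition). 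You instead apply the Lemma \ref{lemma_bounded_lambda} mechanism pointwise, $d(\lambda)\ge V_0(\pi^\dagger)+C\|\lambda\|_1$, to get negative drift outside a ball, switch to the norm Lyapunov $\Psi=\|\lambda-\lambda^\star\|$ whose increments are bounded by $\eta_\lambda\sqrt{m}B$, and invoke Hajek's drift lemma to obtain $\sup_k\mathbb{E}[e^{\gamma\Psi(\lambda_k)}]<\infty$, hence $\sup_k\mathbb{E}\|\lambda_k-\lambda^\star\|^2<\infty$, and conclude by Markov. What your route buys is a genuinely uniform-in-$k$ moment bound and a fully explicit treatment of the returns to the compact set, which the paper handles only informally; the price is reliance on an external drift theorem, whereas the paper's stopped-supermartingale argument is self-contained and more elementary. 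Your diagnosis of why a naive induction on the quadratic Lyapunov fails (the negative drift enters through $\mathbb{E}[\sqrt{\Phi_k}\,]$, which cannot be lower-bounded by $\sqrt{\mathbb{E}[\Phi_k]}$) is essentially the right one, and it is precisely the obstruction that the paper's stopping-time construction and your bounded-increment Lyapunov each circumvent in their own way.
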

\begin{proof}

Let $\calD$ be the set defined in \eqref{eqn_bounded_lambda} and $\calD^c$ its complement. By virtue of Lemma \ref{lemma_bounded_lambda} it follows that $\calD$ is contained in a compact set. Hence, to prove the result it suffices to show that the sequence of probabilities $\{p(\lambda_k | \lambda_0)\}$ is tight in the complement of $\calD$. 

In addition, denote by $\calB_{\eta_\lambda B}$ a ball centered at $0$ and radius $\eta_\lambda B$ and by $\oplus$ the Minkowski addition. Notice that if $\lambda_k\in\calD$, because the estimate of the gradient in \eqref{eqn_stochastic_dual_descent_specified} is bounded by $B$, then we have that $\lambda_{k+1}\in\calD\oplus\calB_{\eta_\lambda B}$. This means that the first iterate that is in $\calD^c$ is such that is has bounded norm with probability one.  

Hence, without loss of generality assume that the sequence $\left\{\lambda_k\right\}$ is such that $\lambda_0\in\calD^c$. Let us define the following stopping time $K_0 = \min_{k\geq0} \left\{\lambda_k\in\calD\right\}$. And define the sequence $\lambda_{k \wedge K_0}$, where $\wedge$ denotes the minimum between $k$ and $K_0$. Note that the previous discussion allows us to consider only indices $k\leq K_0$. Indeed, at $k=K_0$ the iterate belongs to the compact set $\calD$ and one can re-define an index $j = k-K_0-K_0^\prime$ where $K_0^\prime$ is the number of iterations needed for $\lambda_k \notin \calD$.   

We denote by $\left\| \lambda- \lambda^\star\right\|^2$ the distance from $\lambda$ to the optimal set of multipliers in  \eqref{eqn_crl_dual_problem}. Then, using the non expansiveness of the norm and substituting  $\lambda_{(k+1)\wedge K_0}$ by its dynamics \eqref{eqn_stochastic_dual_descent_specified} it follows that 
\begin{align}
   & \left\| \lambda_{(k+1)\wedge K_0}- \lambda^\star\right\|^2 \leq \left\| \lambda_{k\wedge K_0}-\eta_\lambda g_k-\lambda^\star\right\|^2 \nonumber\\
    &= \left\|\lambda_{k\wedge K_0}-\lambda^\star\right\|^2- 2\eta_\lambda\left(\lambda_{k\wedge K_0}-\lambda^\star\right)^\top g_k+\eta_\lambda^2\left\|g_k\right\|^2,
\end{align}
where we have written for notation simplicity $g_k \triangleq \sum_{t=kT_0}^{(k+1)T_0-1}\left(r(s_t,a_t)-c\right)/T_0$. 
Next, consider the conditional expectation of $\left\| \lambda_{(k+1)\wedge K_0}- \lambda^\star\right\|^2$ with respect to the sigma algebra $\calF_k$.  %
\begin{align}
   & \mathbb{E}\left[\left\| \lambda_{(k+1)\wedge K_0}- \lambda^\star\right\|^2\big| \calF_k\right] \leq  \left\|\lambda_{k\wedge K_0}-\lambda^\star\right\|^2 \nonumber\\
   &- 2\eta_\lambda\left(\lambda_{k\wedge K_0}-\lambda^\star\right)^\top\mathbb{E}\left[g_k\big|\calF_k\right]+\eta_\lambda^2\mathbb{E}\left[\left\|g_k\right\|^2\right],
\end{align}
where in the previous expression we have used the fact that $\lambda_{k\wedge K_0}$ is $\calF_k$ measurable. Observe that the rewards are bounded (cf., Theorem \ref{theo_main_intro}) it follows that the second moment of the estimate is bounded by $B^2$. Hence the previous expression reduces to 
\begin{align}
   & \mathbb{E}\left[\left\| \lambda_{(k+1)\wedge K_0}- \lambda^\star\right\|^2\big| \calF_k\right] \leq  \left\|\lambda_{k\wedge K_0}-\lambda^\star\right\|^2 \nonumber\\
   &- 2\eta_\lambda\left(\lambda_{k\wedge K_0}-\lambda^\star\right)^\top \mathbb{E}\left[g_k\big|\calF_k\right]+\eta_\lambda^2B^2.
\end{align}
In addition use the fact that $g_k$ is an unbiased estimator of $V(\pi(\lambda_{k\wedge K_0}))-c$, which by Danskin's Theorem \cite{danskin2012theory} is in the subgradient of the dual function. It follows from the convexity of the dual function \cite{boyd2004convex} that  $\left(\lambda_{k\wedge K_0}-\lambda^\star\right)^\top\mathbb{E}\left[g_k\big|\calF_k\right] \leq d(\lambda_{k\wedge K_0}) - d(\lambda^\star)$. Substituting in the previous expression it yields
\begin{align}
   & \mathbb{E}\left[\left\| \lambda_{(k+1)\wedge K_0}- \lambda^\star\right\|^2\big| \calF_k\right] \leq  \left\|\lambda_{k\wedge K_0}-\lambda^\star\right\|^2 \nonumber\\
   &- 2\eta_\lambda\left(d(\lambda_{k\wedge K_0})-d(\lambda^\star)\right)+\eta_\lambda^2B^2.
\end{align}
Using the fact $d(\lambda^\star)\geq P^\star$ and the definition of the set $\calD$  \eqref{eqn_bounded_lambda}, for any  $\lambda_{k\wedge K_0}\notin\calD$ the previous expression reduces to
\begin{equation}\label{eqn_tight_last}
    \mathbb{E}\left[\left\| \lambda_{(k+1)\wedge K_0}- \lambda^\star\right\|^2\big| \calF_k\right] \leq  \left\|\lambda_{k\wedge K_0}-\lambda^\star\right\|^2. 
\end{equation}
Applying the equation recursively and taking expectation it follows that $\mathbb{E}\left[\left\| \lambda_{(k+1)\wedge K_0}- \lambda^\star\right\|^2\right] \leq \left\|\lambda_0-\lambda^\star \right\|^2.$

To complete the proof, for any $\delta>0$ define the set $\calK_\delta = \left\{\lambda\in\mathbb{R}_+^m \mid \left\|\lambda-\lambda^\star\right\|^2\leq  \left\|\lambda_0-\lambda^\star\right\|^2/\delta\right\}$. Notice that this set is closed by definition. In addition, the existence of a strictly feasible policy (see (A1) in Theorem \ref{theo_main_intro}) guarantees that the set of optimal Lagrange multipliers is bounded \cite{bertsekas1999nonlinear}. Thus $K_\delta$ is compact. Hence, it follows from the Markov inequality that 
\begin{align}
\mathbb{P} \left(\lambda_{k\wedge K_0}\in \calK_\delta\right) &=\mathbb{P} \left(\left\|\lambda_{k\wedge K_0}-\lambda^\star\right\|^2 \leq\frac{\left\|\lambda_{0}-\lambda^\star\right\|^2}{\delta} \right)  \nonumber\\
&> 1-\delta\frac{\mathbb{E}\left[\left\|\lambda_{k\wedge K_0}-\lambda^\star\right\|^2\right]}{\left\|\lambda_{0}-\lambda^\star\right\|^2} \geq 1-\delta,
\end{align}

where the last inequality follows from \eqref{eqn_tight_last}. Hence completing the proof of the result. 
\end{proof}

\section{Results for a Realizable Estimator}\label{app:sec_realizable}

In this section we present a result akin to Theorem \ref{theo_main_intro}, but in which we lift the assumption of the estimates in \eqref{eqn_stochastic_dual_descent_specified} being unbiased. Let us remark, that is a customary assumption in reinforcement learning, specially for continuing tasks, to assume that one can sample state-action pairs that are distributed according to a stationary distribution of the MDP that is independent of the initial state~\cite{sutton2018reinforcement}. Such an assumption is equivalent to (A3) in Theorem \ref{theo_main_intro}. 

In practice, it is possible that the previous hypothesis might not hold. Formally, this prevents us from having access to an unbiased estimate of the subgradients of the value functions. To be able to compute these unbiased estimates one needs to rollout the system for an infinite number of steps before modifying the policy. Clearly, yielding an unrealizable algorithm. Indeed, as long as as one does not modify the policy, the Markov decision process defines the following occupancy measure
\begin{equation}
\rho_\pi(s,a) \triangleq \lim_{T\to\infty} \frac{1}{T}\sum_{t=0}^{T-1} p(s_t=s,a_t=a | \pi ).
\end{equation}
As an approximation to sampling from such a distribution, we use a rollout defined until a time horizon $T_0$ and use as estimate of the gradient the expression defined in \eqref{eqn_stochastic_dual_descent_specified}. Since, the horizon $T_0$ is finite, there is no guarantee that the described sampling strategy provides an unbiased estimator. However, for long enough $T_0$ the bias in the gradient can be assumed to be made arbitrarily small. Notice that this is a direct consequence of assuming the existence of a stationary distribution for each policy $\pi$. In particular, we assume that sampling from the trajectory under a given policy $\pi$ for long enough time is an arbitrarily good approximation to sampling from the stationary distribution as long as $T_0$ is long enough. We formalize this assumption next.
\begin{assumption}\label{assumption_ams}
For any $\epsilon>0$ and $\beta>0$, there exists $T_0 > 0$ such that 
\begin{align}
    \left\|\mathbb{E}_{s,a_t\sim \pi}\left[\frac{1}{T_0}\sum_{t=kT_0}^{(k+1)T_0-1}r(s_t,a_t)\big| \calF_k\right]  - \right. \hspace{12ex}\nonumber\\
     \left.\mathbb{E}_{(s_t,a_t)\sim\rho_\pi} \left[\frac{1}{T_0}\sum_{t=kT_0}^{(k+1)T_0-1}r(s_t,a_t)\big| \calF_k\right] \right\| \leq \epsilon
\end{align}
and 
\begin{align}
    \left|\mathbb{E}_{s,a_t\sim \pi}\left[\frac{1}{T_0}\sum_{t=kT_0}^{(k+1)T_0-1}r_{\lambda_k}(s_t,a_t)\big| \calF_k\right]  - \right. \hspace{10ex} \nonumber\\
    \left. \mathbb{E}_{(s_t,a_t)\sim\rho_\pi} \left[\frac{1}{T_0}\sum_{t=kT_0}^{(k+1)T_0-1}r_{\lambda_k}(s_t,a_t)\big| \calF_k\right] \right| \leq \beta,
\end{align}
where $r_\lambda(s_t,a_t)$ is the reward as defined in \eqref{eqn_lagrangian_reward}.

\end{assumption}
This assumption is milder than the one stated in Theorem \ref{theo_main_intro}. Indeed, (A3) from Theorem \ref{theo_main_intro} is recovered from Assumption \ref{assumption_ams} with $\epsilon=\beta =0$. To relax this assumption in the sampling time, we require the following technical assumption regarding the dual function. 
\begin{assumption}\label{assumption_extra}
Let us denote by $\calB_\epsilon \subset \mathbb{R}^m$ the ball with center at zero and radius $\epsilon$. Further, define the following set 
\begin{align}
\Lambda_{\eta,\epsilon} = \bigl\{ \lambda\in\mathbb{R}^m_+ \mid &  \mbox{ for all} \, v\in \calB_\epsilon, g\in \partial d(\lambda), \nonumber\\
& \left( g+v\right)^\top(\lambda-\lambda^\star) \geq \eta B^2/2  \bigr\}. 
\end{align}
Then, there exists $\eta,\epsilon, B_\lambda>0$ such that $\Lambda_{\eta,\epsilon} \neq \emptyset$ and for every $\lambda \in \Lambda_{\eta,\epsilon}^c$, we have $\left\|\lambda-\lambda^\star\right\|\leq B_\lambda$.
\end{assumption}
The interpretation of this assumption is that if the term $\lambda-\lambda^\star$ is large, then the increment in the function $g^\top\left(\lambda-\lambda^\star\right)$ has to be larger than linear. Recall that the dual function \eqref{eqn_crl_dual_function} is always convex regardless of the structure of the primal problem \cite{boyd2004convex}. In this sense, Assumption \ref{assumption_extra} is a stronger assumption than convexity but milder than strong convexity. 

In summary, Assumptions \ref{assumption_ams} and \ref{assumption_extra} replace the hypothesis (A3) in Theorem \ref{theo_main_intro}. With this modification, the claims in terms of feasibility remain the same. We formalize this result next. 
\begin{theorem}\label{theo_second_version}
An agent chooses actions according to $a_t \sim \pi(\lambda_k)$, where $\pi(\lam_k)\in\Pi(\lam_k)$ with $\lambda_k$ updates following \eqref{eqn_stochastic_dual_descent_specified}. Assume that
\begin{description}
\item[(A1)] There exists a strictly feasible policy $\pi^\dagger$ such that for some $C>0$ and all $i$ constraints $V_i(\pi^\dagger)-c_i \geq C$.
\item[(A2)] There exists $B$ such that $|r_i(s,a) - c_i| \leq B$ for all states $s$, actions $a$ and constraints $i$.
\item[(A3)] Assumptions \ref{assumption_ams} and \ref{assumption_extra} hold. 
\end{description}
Then, the state-action sequences $(s_t,a_t)$ are feasible with probability one,
\begin{align}\label{eqn_theo_feasibility_informal2}  
    \liminf_{T\to\infty} \frac{1}{T}\sum_{t=0}^T r_i(s_t,a_t) \geq c_i, 
    \quad \text{~a.s.}
\end{align}
And they are near-optimal in the following sense
\begin{align}\label{eqn_theo_optimality_informal2}
  \lim_{T\to\infty} \mathbb{E}\left[\frac{1}{T} \sum_{t=0}^T r_0(s_t,a_t) \right]\geq P^\star-\eta_\lambda \frac{B^2}{2} - \beta,
\end{align}
where $\beta>0$ is the constant defined in Assumption \ref{assumption_ams}.
\end{theorem}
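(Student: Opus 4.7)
The plan is to mirror the three-step architecture of the proof of Theorem \ref{theo_main_intro} — tightness, feasibility, and optimality — while tracking the estimation bias. The role of (A3) in the original argument was precisely twofold: Lemma \ref{lemma_tight} used $\mathbb{E}[g_k\mid\calF_k]$ as a genuine subgradient of the dual function via Danskin, and Lemma \ref{lemma_complementary_slackness} together with Proposition \ref{prop_optimality} replaced expected rollout averages with true value functions. Under Assumption \ref{assumption_ams}, both identities degrade by an additive $\varepsilon$ (for the vector of constraint estimates) or $\beta$ (for the scalar Lagrangian estimate), and Assumption \ref{assumption_extra} is precisely the quasi-strong-convexity window that lets us absorb the former error without breaking the supermartingale backbone of the tightness proof.

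For the tightness step I would rerun the Lyapunov analysis of Lemma \ref{lemma_tight}, replacing the compact sublevel set $\calD$ by $\Lambda_{\eta_\lambda,\varepsilon}^c$, with $\eta$ chosen equal to $\eta_\lambda$ in Assumption \ref{assumption_extra}. On this set, Assumption \ref{assumption_extra} combined with Danskin's theorem (so that $\mathbb{E}[g_k\mid\calF_k]$ is a perturbation, by an $\varepsilon$-vector, of a true subgradient of $d$ at $\lambda_{k\wedge K_0}$) gives $(\lambda_{k\wedge K_0}-\lambda^\star)^\top \mathbb{E}[g_k\mid\calF_k] \ge \eta_\lambda B^2/2$, which exactly cancels the variance term $\eta_\lambda^2 B^2$ in the usual squared-distance recursion and makes $\|\lambda_{k\wedge K_0}-\lambda^\star\|^2$ a supermartingale up to the entry time into $\Lambda_{\eta_\lambda,\varepsilon}^c$. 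The remaining work is identical to the closing argument of Lemma \ref{lemma_tight}: apply Markov's inequality to get tightness of the stopped process, then use the bounded diameter $B_\lambda$ of $\Lambda_{\eta_\lambda,\varepsilon}^c$ guaranteed by Assumption \ref{assumption_extra} together with the bounded one-step jump $\|\lambda_{k+1}-\lambda_k\|\le\eta_\lambda B$ to extend tightness to the full sequence $\{\lambda_k\}$ via a Minkowski-sum compactification.

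Feasibility is essentially free: the pathwise lower bound $\lambda_{i,k+1} \ge \lambda_{i,0} - (\eta_\lambda/T_0)\sum_t (r_i(s_t,a_t)-c_i)$ and the contradiction argument of Proposition \ref{prop_feasibility} use nothing but boundedness of the rewards and tightness, both of which we have. For optimality, the conclusion of Lemma \ref{lemma_complementary_slackness}, namely $\limsup_K (1/K)\sum_k \mathbb{E}[\lambda_k^\top g_k] \le \eta_\lambda B^2/2$, goes through unchanged because its proof only telescopes the squared dual-distance recursion and uses $\|g_k\|\le B$. The only place bias enters the final bound is in the step $V(\pi(\lambda_k),\lambda_k) = d(\lambda_k) \ge P^\star$: splitting $r_{\lambda_k} = r_0 + \lambda_k^\top(r-c)$ and applying the scalar bound of Assumption \ref{assumption_ams} to $r_{\lambda_k}$ introduces a single additive $\beta$ on the right-hand side, which is exactly the gap appearing in \eqref{eqn_theo_optimality_informal2} after combining with complementary slackness.

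The main obstacle is the tightness step, because without Assumption \ref{assumption_extra} the biased drift $(\lambda-\lambda^\star)^\top\mathbb{E}[g_k\mid\calF_k]$ could fall below the variance level $\eta_\lambda B^2/2$ at arbitrarily large $\|\lambda-\lambda^\star\|$ and send iterates to infinity. The subtle bookkeeping to handle carefully is the interplay between Assumptions \ref{assumption_ams} and \ref{assumption_extra}: given a candidate $\eta_\lambda$, one must first invoke Assumption \ref{assumption_extra} to fix an admissible $\varepsilon$ (nonempty $\Lambda_{\eta_\lambda,\varepsilon}$ and finite $B_\lambda$), and then invoke Assumption \ref{assumption_ams} to pick $T_0$ large enough to honor that $\varepsilon$ as well as any desired $\beta$. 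Only then do the two assumptions combine to close the supermartingale estimate that drives the rest of the proof.
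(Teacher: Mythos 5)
Your proposal matches the paper's proof essentially step for step: tightness is established by the same supermartingale/stopped-process argument on $\Lambda_{\eta,\epsilon}$, using Danskin's theorem plus Assumption \ref{assumption_ams} to place $\mathbb{E}[g_k\mid\calF_k]$ in $\partial d(\lambda_k)\oplus\calB_\epsilon$ and Assumption \ref{assumption_extra} to make the drift nonpositive; feasibility then follows from Proposition \ref{prop_feasibility}, and optimality reuses Lemma \ref{lemma_complementary_slackness} unchanged with the bias entering only through the $\beta$-approximation of $d(\lambda_k)$ by the rollout Lagrangian estimate. This is the same route the paper takes, including the ordering of constants (fix $\eta,\epsilon$ from Assumption \ref{assumption_extra}, then choose $T_0$ via Assumption \ref{assumption_ams}), so no further comparison is needed.
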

We devote the rest of this section to prove the previous Theorem. We start by leveraging some results from Appendix \ref{sec_unbiased}. By Proposition \ref{prop_feasibility}, to guarantee feasibility, it suffices to show the sequence of probability measures induced by the update \eqref{eqn_stochastic_dual_descent_specified} is tight under the new hypothesis of Theorem \ref{theo_second_version}. The following lemma formalizes this claim. 

\begin{lemma}\label{lemma_tight2}
Under the hypothesis of Theorem \ref{theo_second_version} the sequence of probability measures $\{p(\lambda_k | \lambda_0)\}$ is tight. 
\end{lemma}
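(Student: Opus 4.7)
The plan is to mimic the structure of the proof of Lemma \ref{lemma_tight}, but to replace the role of the sublevel set $\calD$ (defined in \eqref{eqn_bounded_lambda}) by the complement of the set $\Lambda_{\eta_\lambda,\epsilon}$ introduced in Assumption \ref{assumption_extra}. The second part of Assumption \ref{assumption_extra} tells us exactly what we need: $\Lambda_{\eta_\lambda,\epsilon}^c$ is bounded, contained in a ball of radius $B_\lambda$ around $\lambda^\star$, so tightness for iterates inside $\Lambda_{\eta_\lambda,\epsilon}^c$ is immediate. Moreover, by the bounded-gradient argument used at the beginning of Lemma \ref{lemma_tight}, a single step from $\Lambda_{\eta_\lambda,\epsilon}^c$ lands in a Minkowski-enlarged set $\Lambda_{\eta_\lambda,\epsilon}^c \oplus \calB_{\eta_\lambda B}$, which is still bounded. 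Therefore, as in Lemma \ref{lemma_tight}, it suffices to study the sequence starting from some $\lambda_0 \in \Lambda_{\eta_\lambda,\epsilon}$.

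Next I would introduce the stopping time $K_0 = \min\{k \geq 0 : \lambda_k \in \Lambda_{\eta_\lambda,\epsilon}^c\}$ and work with the stopped process $\lambda_{k \wedge K_0}$. Using the non-expansiveness of the projection onto $\mathbb{R}^m_+$ and expanding squares exactly as in Lemma \ref{lemma_tight},
\begin{align*}
\mathbb{E}\!\left[\|\lambda_{(k+1)\wedge K_0} - \lambda^\star\|^2 \,\big|\, \calF_k\right]
&\leq \|\lambda_{k\wedge K_0} - \lambda^\star\|^2 \\
&\quad - 2\eta_\lambda (\lambda_{k\wedge K_0} - \lambda^\star)^\top \mathbb{E}[g_k \mid \calF_k] + \eta_\lambda^2 B^2,
\end{align*}
where $g_k \triangleq \frac{1}{T_0}\sum_{t=kT_0}^{(k+1)T_0-1}(r(s_t,a_t) - c)$. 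The key step is to control the cross term. By Assumption \ref{assumption_ams}, $\mathbb{E}[g_k \mid \calF_k]$ equals an expectation under the stationary occupancy $\rho_{\pi(\lambda_k)}$ plus a bias vector $v_k$ with $\|v_k\| \leq \epsilon$; and by Danskin's theorem the former is a subgradient of $d$ at $\lambda_k$. Hence whenever $\lambda_{k\wedge K_0} \in \Lambda_{\eta_\lambda,\epsilon}$, the very definition of that set gives
\[
(\lambda_{k\wedge K_0} - \lambda^\star)^\top \mathbb{E}[g_k \mid \calF_k] \;\geq\; \eta_\lambda B^2/2.
\]
Substituting this lower bound into the Lyapunov inequality yields
\[
\mathbb{E}\!\left[\|\lambda_{(k+1)\wedge K_0} - \lambda^\star\|^2 \,\big|\, \calF_k\right]
\;\leq\; \|\lambda_{k\wedge K_0} - \lambda^\star\|^2,
\]
so $\{\|\lambda_{k\wedge K_0} - \lambda^\star\|^2\}$ is a nonnegative supermartingale, and iterating the tower property gives $\mathbb{E}\|\lambda_{k\wedge K_0} - \lambda^\star\|^2 \leq \|\lambda_0 - \lambda^\star\|^2$ for all $k$.

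From here, tightness on the path before hitting $\Lambda_{\eta_\lambda,\epsilon}^c$ follows exactly as in Lemma \ref{lemma_tight}: for any $\delta > 0$, take the compact set $\calK_\delta = \{\lambda \in \mathbb{R}^m_+ : \|\lambda - \lambda^\star\| \leq \|\lambda_0 - \lambda^\star\|/\delta\}$ and apply Markov's inequality. Combining this with the already-established boundedness of iterates that have entered $\Lambda_{\eta_\lambda,\epsilon}^c \oplus \calB_{\eta_\lambda B}$, the union of the two compact sets serves as the compact witnessing tightness at level $\delta$. I expect the main obstacle to lie in the correct bookkeeping around Assumption \ref{assumption_ams}: the bias $v_k$ depends on the random policy $\pi(\lambda_k)$, so one must verify measurability and that the inequality $\|v_k\| \leq \epsilon$ holds uniformly in $\lambda_k$; this is precisely why Assumption \ref{assumption_extra} is stated universally over $v \in \calB_\epsilon$, which is what makes the drift argument go through on the set $\Lambda_{\eta_\lambda,\epsilon}$ independently of which particular bias realization occurs.
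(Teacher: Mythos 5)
Your proposal is correct and follows essentially the same route as the paper's proof of Lemma \ref{lemma_tight2}: boundedness of $\Lambda_{\eta,\epsilon}^c$ from Assumption \ref{assumption_extra}, Danskin's theorem plus Assumption \ref{assumption_ams} to place $\mathbb{E}[g_k\mid\calF_k]$ in $\partial d(\lambda_k)\oplus\calB_\epsilon$, and the resulting supermartingale drift outside that compact set, concluding via the Markov-inequality construction of Lemma \ref{lemma_tight}. If anything, you are more explicit than the paper about the stopped process $\lambda_{k\wedge K_0}$ and the final Markov/Minkowski bookkeeping, which the paper only imports by reference to Lemma \ref{lemma_tight}.
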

\begin{proof}
Let $\eta,\epsilon,B_\lambda>0$ be the constants from Assumption \ref{assumption_extra}. Further, by virtue of Assumption \ref{assumption_extra}, the set $\Lambda_{\eta,\epsilon}^c$ is compact. Hence, in a similar manner to the proof in Lemma \ref{lemma_tight} it suffices to show that for any $\lambda_k\in\Lambda_{\eta,\epsilon}$ the following inequality holds, $\mathbb{E}\left[\left\|\lambda_k-\lambda^\star\right\|\right]\leq \mathbb{E}\left[\left\|\lambda_0-\lambda^\star\right\|\right]$. Let us consider the expectation of  $ \left\|\lambda_{k+1}-\lambda^\star\right\|$ conditioned on $\calF_k$
\begin{align}
\mathbb{E}\biggl[\bigl\| \lambda_{k+1} & -\lambda^\star\bigr\|^2  \mid \calF_k\biggr] \leq \mathbb{E}\biggl[\left\|\lambda_{k}-\lambda^\star\right\|^2 \nonumber\\
&- 2\eta \left(\lambda_k-\lambda^\star\right)^\top g_k +\eta^2 \left\|g_k\right\|^2  \mid \calF_k\biggr], 
\end{align}
where in the previous expression, we have defined for convenience $g_k \triangleq \sum_{t=kT_0}^{(k+1)T_0-1}\left(r(s_t,a_t)-c\right)/T_0$ and have used the non-expansive property of the projection. Recall that by virtue of (A2) in Theorem \ref{theo_second_version} the rewards are bounded. Hence, $\left\|g_k\right\|\leq B$. Using the linearity of the expectation and the fact that $\lambda_k$ is measurable with respect to $\calF_k$ it follows that 
\begin{align}
\mathbb{E}\biggl[\bigl\|\lambda_{k+1} &- \lambda^\star\bigr\|^2  \mid  \calF_k\bigg]  \leq \left\|\lambda_k-\lambda^\star\right\|^2 \nonumber\\
&-2\eta\left(\lambda_k-\lambda^\star\right)^\top\mathbb{E}\left[g_k\mid \calF_k\right] + \eta^2B^2.  
\end{align}
We claim that there exists $T_0$ such that $\mathbb{E}\left[g_k\mid \calF_k\right] \in \partial d(\lambda_k) \oplus \calB_\epsilon$. We will prove this claim at the end of the proof. By virtue of Assumption \ref{assumption_extra}, for any $\lambda_k \in \Lambda_{\eta,\epsilon}$ it follows that 
\begin{equation}
-2\eta\left(\lambda_k-\lambda^\star\right)^\top\mathbb{E}\left[g_k\mid \calF_k\right] +\eta^2B^2 < 0.
\end{equation}
 Hence, $\left\|\lambda_k-\lambda^\star \right\|^2$ is a supermartingale. Thus proving the tightness of the sequence of probabilities. We are left to show that  $\mathbb{E}\left[g_k\mid \calF_k\right] \in \partial d(\lambda_k) \oplus \calB_\epsilon$. We do so next. By virtue of Danskin's Theorem \cite{danskin2012theory,bertsekas1999nonlinear} and the definition of the Lagrangian \eqref{eqn_lagrangian} it follows that 
\begin{equation}
\lim_{T\to\infty} \frac{1}{T-kT_0} \mathbb{E}_{a_t\sim\pi(\lambda_k)}\left[ \sum_{t= kT_0}^{T-1}(r(s_t,a_t)-c)\mid \calF_k\right] \in \partial d(\lambda_k).
\end{equation}
Notice that the left hand side of the previous expression, corresponds to sampling from the stationary distribution $\rho_\pi(\lambda_k)$, thus it follows that 
\begin{align}
&\mathbb{E}_{(s_t,a_t)\sim \rho_\pi}\left[\frac{1}{T_0}\sum_{t=kT_0}^{(k+1)T_0-1} r(s_t,a_t)-c\mid \calF_k\right]  = \\ 
&\lim_{T\to\infty} \frac{1}{T-kT_0} \mathbb{E}_{a_t\sim\pi(\lambda_k)}\left[ \sum_{t= kT_0}^{T-1}(r(s_t,a_t)-c)\mid \calF_k\right] \in \partial d(\lambda_k) \nonumber
\end{align}
where the equality follows from the stationarity of $\rho_\pi$. Hence, we have that, 
\begin{equation}\label{eqn_lemma4_aux_1}
    \mathbb{E}_{(s_t,a_t)\sim \rho_\pi}\left[\frac{1}{T_0}\sum_{t=kT_0}^{(k+1)T_0-1} \hspace{-2ex}r(s_t,a_t)-c\mid \calF_k \right] \in \partial d(\lambda_k).
\end{equation}
By virtue of Assumption \ref{assumption_ams} there exists $T_0$ such that %
\begin{equation}\label{eqn_lemma4_aux_2}
    \left\|\mathbb{E}\left[g_k\mid \calF_k\right] -    \mathbb{E}_{(s_t,a_t)\sim \rho_\pi}\left[\frac{1}{T_0}\sum_{t=kT_0}^{(k+1)T_0-1}\hspace{-2ex} r(s_t,a_t)-c\right]\right\| \leq \epsilon.
\end{equation}
Combining the expressions in \eqref{eqn_lemma4_aux_1} and \eqref{eqn_lemma4_aux_2}, we have that $g_k\in \partial d(\lambda_k)\oplus \calB_\epsilon$. This completes the proof of the result.
\end{proof}

Having established the feasibility of the update under the assumptions of Theorem \ref{theo_second_version}, we set our focus into proving the optimality claim. The proof of this is analogous to Proposition \ref{prop_optimality}. Since the ergodic complementary slackness result (cf. Lemma \ref{lemma_complementary_slackness}) is independent of Assumption (A3) it holds in this case as well. In the case of the unbiased estimate (Theorem \ref{theo_main_intro}) the result follows directly from these results using the fact that the sampled cumulative rewards 
\begin{equation}
    \frac{1}{T_0}\mathbb{E}_{a_t\sim \pi(\lambda_k)}\left[\sum_{t=kT_0}^{(k+1)T_0-1}\hspace{-2ex}r_0(s_t,a_t)+\lambda_k^\top r(s_t,a_t)\right]
\end{equation}
is the dual function evaluated at $\lambda_k$. In the context of Theorem \ref{theo_second_version} this does not hold exactly. However the difference can be bounded by virtue of Assumption \ref{assumption_ams} as
\begin{equation}
    \left| d(\lambda_k) - \frac{1}{T_0}\mathbb{E}_{a_t\sim \pi(\lambda_k)}\left[\sum_{t=kT_0}^{(k+1)T_0-1}\hspace{-2ex} r_0(s_t,a_t)+\lambda_k^\top r(s_t,a_t)\right] \right| \leq \beta.
\end{equation}
Taking this error into account and following analogous steps to the proof of Proposition \ref{prop_optimality} one completes the proof. 

\vspace{-1.25ex}

\section{Proof of Proposition \ref{prop_primal_recovery}}\label{app:prop_primal_recovery}

\begin{proof} As per the definition of the dual function \eqref{eqn_crl_dual_function} specialized to $\lambda=\lambda^\star$, we have that $d(\lambda^\star) = \max_{\pi} \calL(\pi,\lambda^\star)$. Recalling the definition in \eqref{eqn_lagrangian_maximizers_rl}, policies achieving this maximization are Lagrangian maximizing policies belonging to the set $\Pi(\lambda^\star)$. Further, evaluating the Lagrangian at any policy cannot exceed the maximum and we therefore have that $\max_{\pi} \calL(\pi,\lambda^\star)\geq \calL(\pi',\lambda^\star)$ for any policy $\pi'$. Specializing to an optimal policy $\pi^\star \in \Pi^\star$ yields 
\begin{align}\label{eqn_prop_primal_recovery_pf_10}
   d(\lambda^\star) = \max_{\pi} \calL(\pi,\lambda^\star) \geq  \calL(\pi^\star,\lambda^\star) .
\end{align}
Consider now the definition of the Lagrangian in \eqref{eqn_lagrangian} and observe that since $\pi^\star \in \Pi^\star$ is optimal, it must also be feasible. Hence, $\left(V_i(\pi^\star)-c_i\right)\geq0$ for all $i=1,\ldots,m$. Given that the Lagrange multipliers are also positive it must be that
\begin{align}\label{eqn_prop_primal_recovery_pf_20}
\calL(\pi^\star,\lambda^\star) = V_0(\pi^\star) + \sum_{i = 1}^m \lambda_i^\star \left(V_i(\pi^\star)-c_i\right)\geq V_0(\pi^\star) = P^\star ,
\end{align}
where we used the fact that for an optimal policy we must have $P^\star=V_0(\pi^\star)$. Combining \eqref{eqn_prop_primal_recovery_pf_10} and \eqref{eqn_prop_primal_recovery_pf_20} we conclude that
\begin{align}\label{eqn_prop_primal_recovery_pf_30}
   D^\star = d(\lambda^\star) = \max_{\pi} \calL(\pi,\lambda^\star)  \geq  \calL(\pi^\star,\lambda^\star) \geq  P^\star.
\end{align}
According to Theorem \ref{theo_no_gap}, there is no duality gap and it must therefore be that the inequalities in \eqref{eqn_prop_primal_recovery_pf_30} hold with equality. Thus $\max_{\pi} \calL(\pi,\lambda^\star) = \calL(\pi^\star,\lambda^\star)$ which implies 
that any optimal policy $\pi^\star \in \Pi^\star$ must be included in the set of Lagrangian maximizing polices for $\lambda=\lambda^\star$. Namely, $\Pi^\star \subseteq \Pi(\lam^\star)$. 
\end{proof}

\bibliographystyle{ieeetr}
\bibliography{bib}

\end{document}